\documentclass[10pt]{article} 
\usepackage[accepted]{tmlr}

\usepackage{amsmath,amsfonts,bm}

\def\eqref#1{equation~\ref{#1}}

\def\1{\bm{1}}

\def\ra{{\textnormal{a}}}

\def\rx{{\textnormal{x}}}
\def\ry{{\textnormal{y}}}
\def\rz{{\textnormal{z}}}

\def\rvw{{\mathbf{w}}}
\def\rvx{{\mathbf{x}}}

\def\rvz{{\mathbf{z}}}

\def\ervx{{\textnormal{x}}}

\DeclareMathAlphabet{\mathsfit}{\encodingdefault}{\sfdefault}{m}{sl}
\SetMathAlphabet{\mathsfit}{bold}{\encodingdefault}{\sfdefault}{bx}{n}

\usepackage{subfigure}
\usepackage{siunitx}
\usepackage{multirow}
\usepackage{hyperref}
\usepackage{url}
\usepackage{booktabs}
\usepackage{wrapfig}
\usepackage{colortbl} 
\usepackage{xcolor}   
\usepackage{amssymb}
\usepackage{graphicx}
\usepackage{subcaption}
\usepackage{enumitem}
\usepackage{amsthm}
\theoremstyle{plain}
\newtheorem{theorem}{Theorem}[section]
\newtheorem{proposition}[theorem]{Proposition}

\theoremstyle{definition}
\newtheorem{definition}[theorem]{Definition}

\theoremstyle{remark}

\title{Reconciling In-Context and In-Weight Learning via Dual Representation Space Encoding}

\author{\name Guanyu Chen \email chen-gy23@mails.tsinghua.edu.cn \\
      \addr Department of Automation,
      Tsinghua University
      \AND
      \name Ruichen Wang \email wang-rc24@mails.tsinghua.edu.cn \\
      \addr Department of Automation,
      Tsinghua University
      \AND
      \name Tianren Zhang\thanks{Co-corresponding authors.} \email trzhang@mail.tsinghua.edu.cn\\
      \addr Department of Automation,
      Tsinghua University
      \AND
      \name Feng Chen\footnotemark[2] \email chenfeng@mail.tsinghua.edu.cn\\
      \addr Department of Automation,
      Tsinghua University}

\def\month{02}  
\def\year{2026} 
\def\openreview{\url{https://openreview.net/forum?id=bJK7VIOWAU }} 

\begin{document}
\setcounter{footnote}{1}

\maketitle

\begin{abstract}

In-context learning (ICL) is a valuable capability exhibited by Transformers pretrained on diverse sequence tasks. However, previous studies have observed that ICL often conflicts with the model’s inherent in-weight learning (IWL) ability. By examining the representation space learned by a toy model in synthetic experiments, we identify the shared encoding space for context and samples in Transformers as a potential source of this conflict. To address this, we modify the model architecture to separately encode the context and samples into two distinct spaces: a \textit{task representation space} and a \textit{sample representation space}. We model these two spaces under a simple yet principled framework, assuming a linear representational structure and treating them as a pair of dual spaces. Both theoretical analysis and empirical results demonstrate the effectiveness of our proposed architecture, CoQE, in the single-value answer setting. It not only enhances ICL performance through improved representation learning, but also successfully reconciles ICL and IWL capabilities across synthetic few-shot classification and a newly designed pseudo-arithmetic task. The code is available at: \url{https://github.com/McGuinnessChen/dual-representation-space-encoding}.

\end{abstract}

\section{Introduction}

In recent years, large-scale models based on the Transformer architecture have demonstrated remarkable capabilities across language~\citep{brown2020language, guo2025deepseek}, vision~\citep{achiam2023gpt, maaz2024video}, and robotics~\citep{driess2023palm, zitkovich2023rt}. 
Among these capabilities, the in-context learning (ICL) ability has drawn increasing attention, as it offers a general paradigm for task generalization. ICL refers to the capability of a pretrained Transformer model to solve previously unseen tasks by using demonstration examples in the prompt—without updating its parameters. In contrast, in-weight learning (IWL) characterizes the conventional ability of a model to recall the memory stored in weights. An ideal model would seamlessly integrate both capabilities: relying on memory to handle training tasks, while adapting to new tasks through contextual cues. 

However, recent studies suggest that there exists an inherent conflict between ICL and IWL~\citep{parkcompetition, nguyendifferential}. This leads to a notable performance degradation when the demonstration examples deviate from the training distribution~\citep{chantoward}, thereby limiting the generalization ability of ICL. This conflict is closely related to training settings such as data distribution~\citep{chan2022data}, model size~\citep{shi2024larger}, and training time~\citep{singhstrategy}.~\citet{singh2023transient} hypothesize that it arises from the competition between the two capabilities for the shared model circuits.~\citet{nguyendifferential} on the other hand, attributes this to the different relative learning rates of ICL and IWL. Furthermore, some studies~\citep{chan2022data, singh2023transient, ananddual} have attempted to alleviate the tension between ICL and IWL, but can only achieve a tradeoff under particular Zipfian data distributions. Hence, it remains a valuable question to understand and eliminate the conflict between ICL and IWL.

In this work, we aim to understand why models tend to oscillate between ICL and IWL capabilities under different settings from the perspective of representation learning. We observe that in models with strong IWL performance, the learned representation space provides a sufficiently rich representation of the query samples themselves. In contrast, models with strong ICL performance learn representation spaces that effectively encode contextual information. However, these two desirable structures of representation are difficult to acquire simultaneously, making it challenging to achieve strong ICL and IWL performance at the same time (see Figure~\ref{all} bottom). Motivated by this observation, we hypothesize that the ICL-IWL conflict can be alleviated by encoding the context and query samples into separate spaces, thereby preventing interference between the two types of information. We refer to these spaces as the \textit{task representation space} and the \textit{sample representation space}, respectively. To provide a principled modeling of these two spaces, we build upon the widely accepted linear representation hypothesis~\citep{mikolov2013linguistic, nanda2023emergent, park2023linearhypothesis} and propose to model the task representation space as the dual space of the sample representation space. We prove the completeness of a sample representation space under sufficient training tasks, which could facilitate task generalization by ICL. We also formalize the entangled nature of Transformers' encoding process -- standard softmax attention does not support such a dual-space structure, highlighting a contrast with linear attention mechanisms commonly adopted in recent theoretical analysis.

Furthermore, we propose a straightforward architectural design, CoQE, as an algorithmic exploration. Unlike standard Transformers, CoQE employs separate pathways to encode context and query samples, aiming to learn the task and sample representation space, respectively. The final model output is obtained by computing the inner product between elements from the two spaces according to the Riesz representation theorem. Our synthetic experimental results show that CoQE not only achieves lower ICL error in both in-distribution and out-of-distribution scenarios, but also reconciles ICL and IWL across synthetic few-shot classification task of \citet{chan2022data,singh2023transient,singhstrategy} and our newly designed pseudo-arithmetic task.

\section{Related Work}

\paragraph{Investigations on ICL Mechanisms.} From the theoretical perspective, existing works have examined how Transformers perform ICL across various scenarios~\citep{zhang2023trained, li2023how, tianyuandongscansnap, nichani2024transformerslearncausalstructure, chen2024unveilinginductionheadsprovable, wumany,  huang2024task, oko2024pretrained,liangtransformers}. These studies typically analyze simplified architectures such as linear self-attention or query-key-combined formulations, and many of them conduct analyses from the perspective of Bayesian model averaging~\citep{zhang2023BMA, ye2024pretrainingincontextlearningbayesian}. From the empirical perspective,~\citet{garg2022what} firstly demonstrated that Transformer-based ICL can generalize effectively to out-of-distribution (OOD) tasks, leading to a surge of interest in exploring its generalization behavior~\citep{ahuja2023closer, kossen2024incontext, pan2023context, fan2024transformers, xiongeverything, yadlowsky2023pretraining, bu2025provable}. Some work has shown that LLM can implicitly encode task vectors during ICL~\citep{hendel2023taskvector, todd2024functionvectorslargelanguage, guotransformers, yang2025taskvectorsincontextlearning, hanemergence}. Unlike these works that focus only on task information encoded in the model’s representation space, we examine the relationship between task information and the intrinsic information of the samples. From this perspective, we seek to explain the conflict between ICL and IWL.

\paragraph{Relationship between ICL and IWL.}
Beyond investigations on the ICL mechanisms, some studies have found that ICL is not a guaranteed and stable capability of Transformers; rather, it competes with the model's inherent in-weight learning (IWL) ability, which relies on information stored in the weights~\citep{chan2022data, singh2023transient, reddymechanistic, panwarcontext}.~
\citet{chan2022data} examined the impact of different training data distributions on both abilities, finding that only when the training data follows a certain Zipfian distribution can both abilities coexist.~\citet{singh2023transient} further confirmed the transient nature of ICL, observing that it always fades after emerging and gives way to IWL.~\citet{nguyendifferential} attributes this to the different relative learning rates of ICL and IWL, and conducted an analysis on a simplified one-layer transformer model.~\citet{chantoward} proposed a simple theoretical model, which is a linear combination of an in-weight learner and an in-context learner.~\citet{singhstrategy} empirically discovered a more complex coopetition relationship between ICL and IWL. However, to date, no work has truly achieved coexistence between ICL and IWL across varied training conditions.

\paragraph{Linearization in Latent Space.} Beyond task-specific vectors, a line of work has examined how models internally encode a variety of abstract concepts as linear vectors in latent space, giving rise to the commonly accepted \textit{linear representation hypothesis}~\citep{mikolov2013linguistic,nanda2023emergent,park2023linearhypothesis}. Several studies have shown that concepts such as truthfulness~\citep{marks2024geometrytruthemergentlinear}, time and space~\citep{gurneelanguage}, and other semantic properties~\citep{dalvidiscovering, merullo2024w2varithmetic, yephysics} can emerge in the model's latent space, using linear probes as the primary tool. Additionally, larger models tend to yield more disentangled and interpretable internal representations~\citep{bricken2023emergence, cunningham2023sparse}, and this can be regarded as evidence of a world model within large models~\citep{zhangneural}. In this work, to provide a principled treatment of the encoding spaces of context and query samples, we introduce the notion of a linear task representation space, grounded in the linear representation hypothesis.

\section{Preliminaries}

\paragraph{In-context learning setup.}
The basic setup for analyzing ICL was first introduced by~\citet{garg2022what} and has since been widely adopted~\citep{yadlowsky2023pretraining, pan2023context}. Consider a distribution $\mathcal{D}_{\mathcal{X}}$ over an input space $\mathcal{X} \subseteq \mathbb{R}^{d_x}$, and let $\mathcal{F}$ denote a class of functions over a distribution $\mathcal{D}_{\mathcal{F}}$. For each prompt, we first sample a task $f \sim \mathcal{D}_{\mathcal{F}}$, then draw a set of $n$ input-output pairs $\{(\rvx_i, \ry_i)\}_{i=1}^n$, where $\rvx_i \stackrel{\text{i.i.d.}}{\sim} \mathcal{D}_{\mathcal{X}}$ and $\ry_i = f(\rvx_i)$. These sample pairs serve as context demonstrations. Then, we independently generate a query input $\rvx_{q} \sim \mathcal{D}_{\mathcal{X}}$. The final prompt is gathered as a sequence:
\[
\mathcal{P} = \big( \rvx_1, \ry_1, \dots, \rvx_n, \ry_n, \rvx_q \big).
\]
The ICL capability of a pretrained model $\mathbb{M}_\theta$ refers to its accuracy to produce predictions $\hat{\ry}_{q} = \mathbb{M}_\theta(\mathcal{P})$ for $\ry_{q} = f(\rvx_{q})$, without having explicit knowledge of the current task $f$ and without updating its parameters.

\citet{chan2022data} extend this setting by introducing few-shot image classification tasks. In this setup, \( \rvx \) represents an encoded image, and \( \mathcal{F} \), as a set of classifiers, maps \( \mathcal{X} \) to a finite label set \( \mathcal{Y} \). The ICL capability refers to the model's ability to correctly classify a query image $\rvx_q$ based on the demonstrations.

\paragraph{Transformer model.}
A standard single-head self-attention (SA) layer \citep{vaswani2018attention} operates on an input matrix $Z \in \mathbb{R}^{d_e \times L}$, where $L$ is the sequence length and $d_e$ the embedding dimension. 
Let $Q = W_Q Z$, $K = W_K Z$, $V = W_V Z$ with $W_Q, W_K \in \mathbb{R}^{d_k \times d_e}$ and $W_V \in \mathbb{R}^{d_v \times d_e}$. 
The attention output is
\[
\mathrm{SA}(Z) 
= Z + W_O \, V \cdot \mathrm{softmax}\!\left( \frac{K^\top Q}{\sqrt{d_k}} \right),
\]
where $W_O \in \mathbb{R}^{d_e \times d_v}$ and the softmax is applied column-wise.  

For the theoretical analysis of ICL, the prompt $\mathcal{P}$ is typically re-organized into an embedding matrix:
\[
Z =
\begin{pmatrix}
\rvx_1 & \dots & \rvx_n & \rvx_q \\
\ry_1 & \dots & \ry_n & 0
\end{pmatrix} 
\in \mathbb{R}^{(d_x+1) \times (n+1)},
\]
where $d_x$ is the input feature dimension. Moreover, they often use a linear self-attention (LSA) variant  obtained by removing the softmax and merging parameters:
\[
\mathrm{LSA}(Z) = Z + \frac{1}{n}W_{OV} Z Z^\top W_{KQ} Z,
\]
where $W_{OV}=W_OW_V, W_{KQ}=W_K^\top W_Q \in \mathbb{R}^{(d_x+1) \times (d_x+1)}$ are trainable, and ${1}/{n}$ is a scaling constant.  
The model prediction $\hat{\ry}_q$ for the query is taken as the bottom-right entry of $\mathrm{LSA}(Z)$.

\paragraph{Dual space.}

Before formally introducing our dual-space modeling framework, we first present the general mathematical definition of the dual space.

\begin{definition}[Dual space]
\textit{Let $V$ be a finite-dimensional inner product space over a field $\mathbb{F}$ (typically $\mathbb{R}$ or $\mathbb{C}$) with inner product $\langle \cdot, \cdot \rangle$.  
The \emph{dual space} of $V$, denoted $V^{*}$, is the set of all linear functionals from $V$ to $\mathbb{F}$:
\begin{equation}
    V^{*} \triangleq \{ f : V \to \mathbb{F} \mid f \ \text{is linear} \}.
\end{equation}}
\end{definition}
For every $f \in V^{*}$, there exists a unique vector $\omega \in V$, called the Riesz representation of $f$, such that $ f(v) = \langle \omega, v \rangle, \forall v \in V$. Let $\{e_1, \dots, e_n\}$ be the basis of $V$. The dual basis $\{e^1, \dots, e^n\} \subset V^{*}$ is defined by $e^i(e_j) = \delta_{ij},  1 \le i,j \le n$, where $\delta_{ij}$ is the Kronecker delta. 

In the following, we will show that this dual-space formulation can be used to model the relationship between a task representation space and the model’s sample representation space. Moreover, by the Riesz representation theorem, elements from the two spaces can be composed via inner product.

\begin{figure}[t]
	\vspace{-0.25in}
	\begin{center}
		\includegraphics[width=\textwidth]{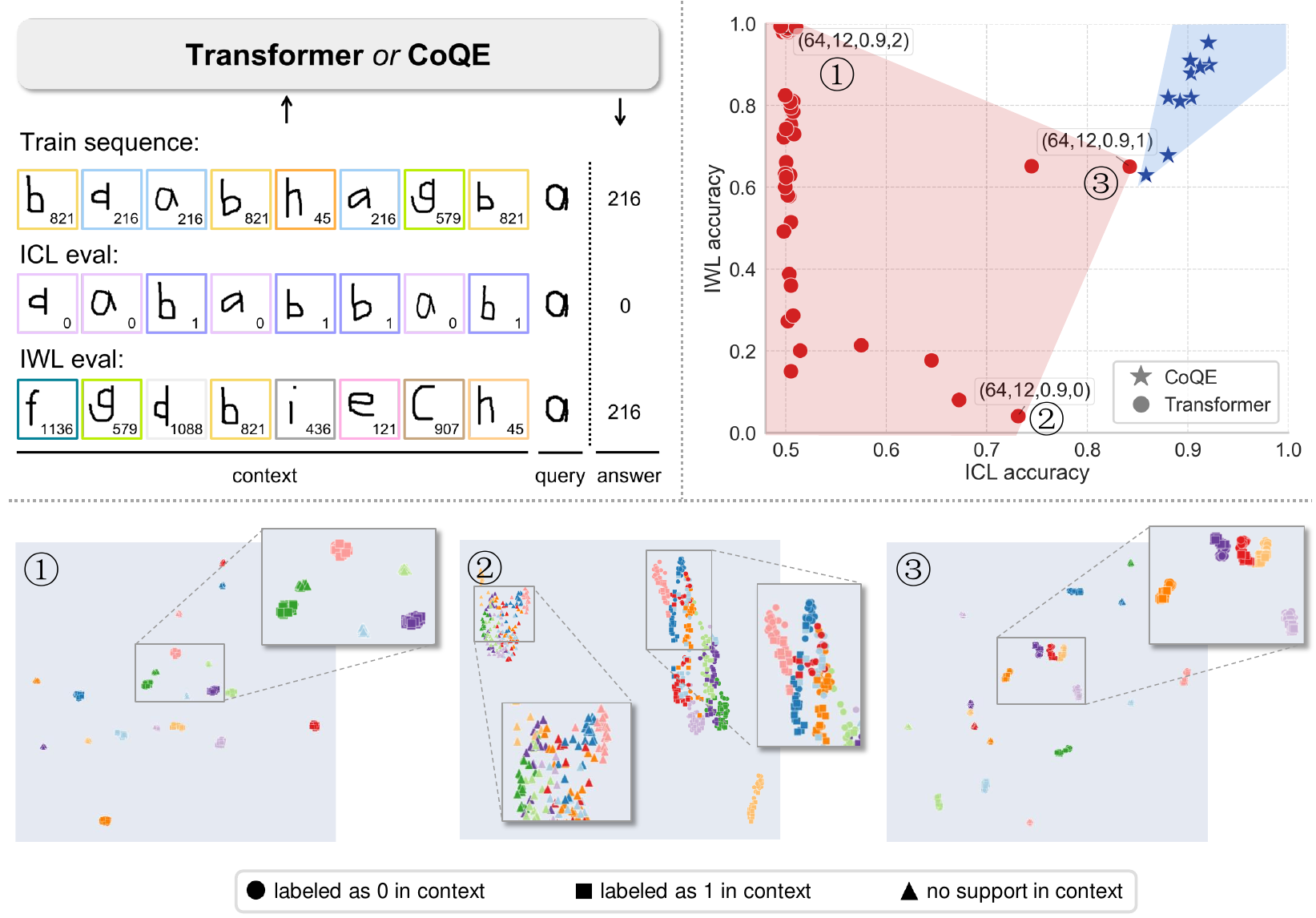}
	\end{center}
	\vspace{-8pt} 
	\caption{(Top left) Overview of our synthetic task setup. (Top right) ICL and IWL performances under different training settings ($E$, $L$, $P_\text{bursty}$, \( \alpha \)). The Transformers fluctuate between ICL and IWL capabilities, whereas our CoQE models robustly reconcile the two capabilities. (Bottom) Representation visualization of ten classes on distinct context conditions. We observe that good clusters for samples and good clusters for contexts are hard to achieve simultaneously. Detailed discussions are presented in Section~\ref{rep_anal}.
	}
	\label{all}
	\vskip -0.2in
\end{figure}

\section{Representation Space Analysis}
\label{rep_anal}

In this section, we train and evaluate Transformers on synthetic datasets (Figure~\ref{all} top left) under a range of training conditions. By examining the learned representation spaces, we investigate how models with good ICL performance differ from those that excel in IWL.

\subsection{Synthetic Task Setup}

We use a synthetic few-shot classification task based on the Omniglot dataset~\citep{doi:10.1126/science.aab3050}. The dataset contains $1,623$ character classes, each with $20$ samples. We construct prompt sequences that each consists of eight image-label pairs followed by a query image~\citep{chan2022data, singh2023transient, singhstrategy}. We use a ResNet to encode images input, and the training objective minimizes the cross-entropy loss between the model's prediction and the correct label for the query image. 

Training sequences have two key properties that affect the tradeoff between ICL and IWL: burstiness and Zipfian exponent. In bursty sequences, three out of the eight image-label pairs in the context share the same class as the query sample. This setup allows the model to infer the correct label based on context alone, which has been found to incentivize ICL while suppressing IWL~\citep{chan2022data}. To avoid repetition biases, bursty sequences additionally include three image-label pairs from a distinct distractor class. \( P_\text{bursty} \) denotes the proportion of bursty sequences in the training set, while the rest are generated via random sampling. The second factor is the Zipfian exponent, which controls the frequency distribution of different classes. Under the Zipfian distribution, the class probability is defined as \( p(R = r) \propto 1/{r^\alpha} \), where $R$ is the rank of the class, and \( \alpha \) is the Zipfian exponent. When \( \alpha = 0 \), the distribution becomes uniform.~\citet{chan2022data} observe that when \( \alpha = 1 \), a \textit{sweet spot} emerges, where the model reaches a tradeoff for both ICL and IWL. During training, we keep image-label mappings fixed.

Test sequences are divided into two kinds, corresponding respectively to the evaluation of ICL and IWL capability. For ICL, we use sequences with four images from each of two classes, and we set the class labels to either $0$ or $1$ randomly for each sequence. Accuracy on this evaluator is measured across $0$ and $1$ as possible outputs, and chance-level accuracy is $50\%$. As these labels are not associated with these images during training, the only way to achieve above-chance accuracy is to refer back to the context. For IWL, we use sequences where none of the context images come from the same class as the query, but all of the image-label mappings are the same as during training. In this case, ICL is not useful, as there are no matching images in context, so the model must rely on mappings stored in weights.

\subsection{Observations and Analysis}

\begin{figure}[t]
\vspace{-0.25in}
\begin{center}
\includegraphics[width=\textwidth]{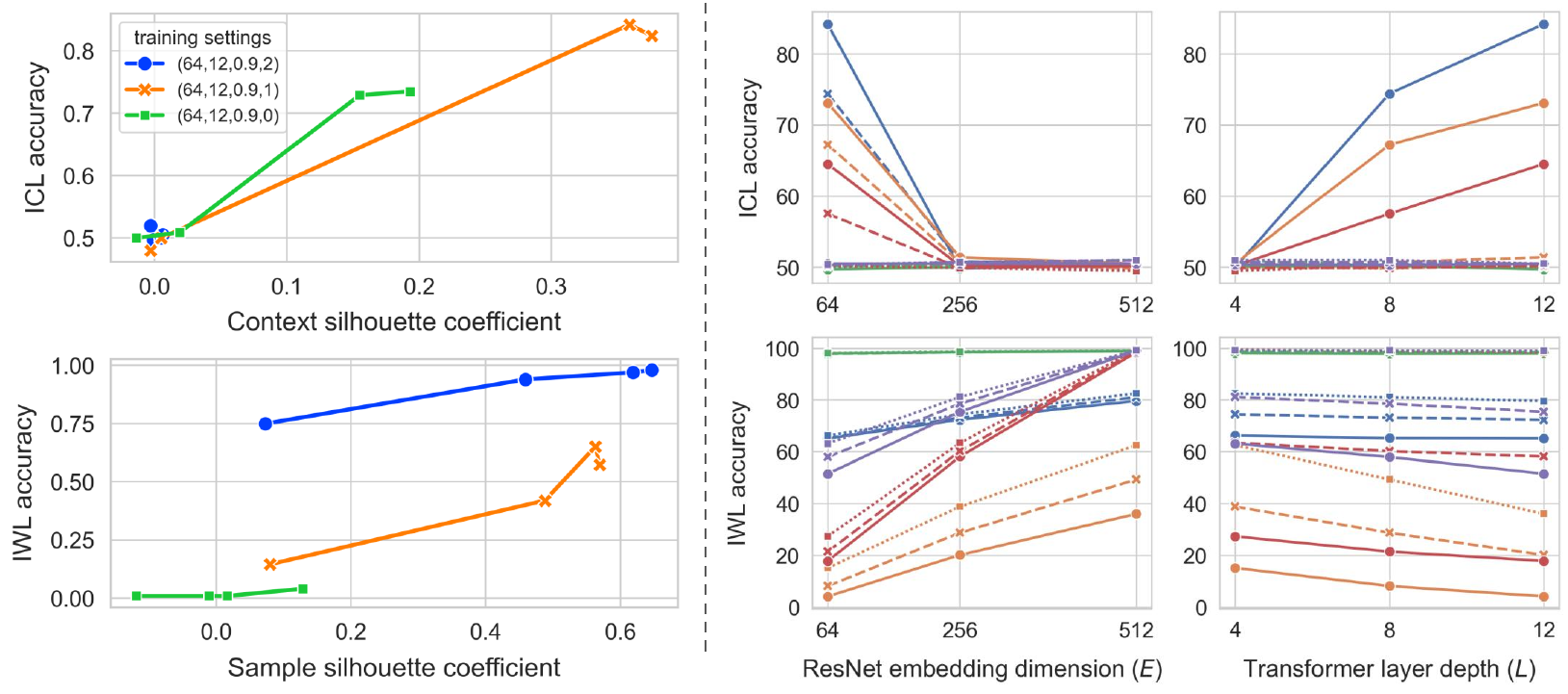}
\end{center}
\vspace{-8pt} 
\caption{(Left) For three training settings and four training checkpoints (3k, 10k, 50k, 100k steps), there are clear positive correlations between ICL performance and CSC, and between IWL performance and SSC. (Right) Through various experiments, we observe that larger $E$ improves IWL but causes ICL to disappear,  and larger $L$ consistently enhances ICL while slightly harming IWL. Different colors and line styles represent different training settings. Detailed results are provided in Table~\ref{tab:model_performance}.
}
\label{rep}
\vskip -0.2in
\end{figure}

During training, we consider four factors: the ResNet embedding dimension \(E\), the number of Transformer layers \(L\), the bursty proportion \(P_{\text{bursty}}\), and the Zipfian exponent \(\alpha\). The training outcomes under different combinations of these factors are shown in the red region of Figure~\ref{all} top right. We also collect and visualize the query sample representations of three models on the boundary of the red region, corresponding to the configurations \((64, 12, 0.9, 2)\), \((64, 12, 0.9, 0)\), and \((64, 12, 0.9, 1)\). 
Specifically, we examine the token embeddings of \(\rvx_q\) under three different context conditions: 
(1) when four images of the same class as \(\rvx_q\) appear in the context and are labeled as 0, 
(2) when they are labeled as 1, and 
(3) when no image of the same class appears in the context. 
For the choice of token embedding layer, we empirically find that different layers yield consistent qualitative results; therefore, we directly use the final layer embeddings. We focus on the ten most common query classes, and the UMAP visualizations are shown in the Figure~\ref{all} bottom.

\vspace{-2pt}
\paragraph{Observation 1: ICL corresponds to good representations for context, while IWL corresponds to good representations for samples. The two are difficult to achieve simultaneously.} As shown in Figure~\ref{all} bottom, the first model forms well-clustered representations of query samples from different categories (points with different colors), but fails to capture the label distinctions provided in the context demonstrations (the squares and circles of the same color overlap). This indicates that the learned representation space sufficiently encodes intrinsic information of samples but lacks a clear encoding of the task information contained in the demonstrations. Consequently, the model achieves high IWL accuracy but exhibits almost no ICL capability. In contrast, the second model shows the opposite behavior: it fails to form distinct clusters for samples from different categories but demonstrates sensitivity to contextual information (the squares and circles of the same color are shifted apart). So the model has little IWL ability but possesses some ICL capability. The third model achieves a tradeoff between the sample information and contextual information in its representation space, thereby realizing a tradeoff between ICL and IWL performance as well. However, the training conditions are rather restrictive, and both abilities remain suboptimal.

To further support our observation, we employ two quantitative metrics derived from the standard silhouette coefficient, which evaluates cluster cohesion and separation. The silhouette score $s(x)$ measures how similar a sample $x$ is to its own cluster compared to other clusters. Using Euclidean distance $d(\cdot,\cdot)$, we define $a(x)$ as the average distance between $x$ and all other points in its own cluster $C_x$, and $b(x)$ as the average distance to the nearest neighboring cluster $C_k$ (where $k \neq x$).
\begin{equation}
a(x) = \frac{1}{|C_x| - 1} \sum_{x' \in C_x, x' \neq x} d(x, x'), \quad b(x) = \min_{C_k \neq C_x} \left( \frac{1}{|C_k|} \sum_{x' \in C_k} d(x, x') \right)
\end{equation}
The silhouette score is then given by:
\begin{equation}
s(x) = \frac{b(x) - a(x)}{\max\{a(x),\, b(x)\}}
\end{equation}
The first metric we employ is context silhouette coefficient (CSC). It measure whether the model learns "good representations for context". For a fixed object class $c \in \{1,\dots, N\}$ (with $N = 10$), we treat the samples as two distinct clusters based on their contextual labels (0 or 1). Here, $a(z)$ measures the distance to samples with the same context label, while $b(z)$ measures the distance to the opposite context label. We average the scores within each class $c$ and then across all $N$ classes:
\begin{equation}
\text{CSC} = \frac{1}{N} \sum_{c=1}^N\frac{1}{|Z_c|} \sum_{z \in Z_c} s(z).
\end{equation}
The second metric is sample silhouette coefficient (SSC). SSC adapts the metric to measure "good representations for samples"  without context support. In this setting, the clusters are defined by the $N$ ground-truth object classes. For a sample $i$ with class $y_i$, $a(i)$ is the intra-class distance, and $b(i)$ is the distance to the closest distinct class. The final metric is the average across all $M$ samples of all classes:
\begin{equation}
\text{SSC} = \frac{1}{M} \sum_{i=1}^{M} s(i).
\end{equation}
In Figure~\ref{rep} left, we report how ICL performance correlates with CSC, and  how IWL performance correlates with SSC. Both show a clear positive correlation, providing quantitative evidence for our observation that the structure of the learned representation space is tightly linked to the model’s ICL and IWL behavior.

\paragraph{Observation 2: Model size also affects the ICL-IWL tradeoff.} 
We make another new finding that model size also strongly affects the ICL-IWL tradeoff in standard Transformers (Figure~\ref{rep} right). We observe that, under the same conditions, a $12$-layer Transformer exhibits stronger ICL but weaker IWL compared to a $4$-layer Transformer. We suppose that this is due to the Transformer’s inductive bias toward attending to context, compared to just memorizing context-irrelevant sample information. We also observe that increasing the ResNet embedding dimension from $64$ to $512$ nearly eliminates the model’s ICL ability while substantially enhancing IWL. Notably, we connect a fully connected layer after the ResNet to reduce the dimension back to $64$ before inputting to the Transformer, ensuring that the latter’s role remains unchanged. We speculate that the larger ResNet increases the expressivity of individual sample tokens, and when a single token is sufficiently expressive to solve the task, the model tends to ignore the context. This is consistent with Observation 1 that sample information and contextual information are difficult to coexist during training.

\section{Dual-Space Modeling of Task and Sample Repressentations}

We have shown that the conflict between ICL and IWL originates from the difficulty of simultaneously encoding sample information and contextual information within the same representation space of Transformers. We aim to reconcile ICL and IWL by encoding the context and the query samples into separate representation spaces, thereby addressing their incompatibility within a single space. This raises a new question: how can we provide a principled modeling of these two spaces such that they can interact in a coherent and meaningful way? To this end, we start from the simple yet powerful assumption of a linear representational structure and propose our \textit{dual-space} modeling framework. All proofs are provided in the Appendix~\ref{appendix_proof}.


\subsection{Task Representation Space}

We begin with the widely acknowledged linear representation hypothesis~\citep{mikolov2013linguistic, nanda2023emergent, park2023linearhypothesis}, which is the idea that models internally encode abstract semantics of samples as linear vectors in latent space. And because tasks are inherently semantic, this hypothesis can be formalized from a measurement perspective according to~\citet{park2023linearhypothesis}.

\begin{definition}[Linear sample representation space]
\label{3.1}
\textit{Let $\mathcal{X}\subseteq \mathbb{R}^{d_x}$ denote the input space and $\mathcal{Y}$ the label set. A linear sample representation space $\mathcal{M}\subseteq \mathbb{R}^{d}$ is a finite-dimensional inner product space equipped with a mapping $\phi:\mathcal{X} \to \mathcal{M}$, such that
\begin{enumerate}
    \item (Learnability) $\phi$ is parameterized by a model $\mathbb{M}$ and can be learned from data;
    \item (Linear Measurement) In the case of regression with $\mathcal{Y} \subseteq \mathbb{R}$, there exists a linear transformation $\omega$ such that, given any $(\rvx,\ry)$ pair, the label can be expressed as
\begin{equation}
\ry = \langle \omega,\, \phi(\rvx) \rangle.
\end{equation}
In the case of classification with $\mathcal{Y} = \{0,1\}$, the label probability is given by
\begin{equation}
\mathrm{logit}\mathbb{P}(\ry=1\mid \rvx) = \langle \omega,\, \phi(\rvx) \rangle .
\end{equation}
\end{enumerate}}
\end{definition}
Definition~\ref{3.1} formalizes the notion of a sample representation space under the linear representation hypothesis in the single-task setting. It can be straightforwardly extend to the multi-task case, assuming that \textit{there exists a shared linear sample representation space across different ICL tasks}. Note that this assumption has been implicitly embedded in a wide range of theoretical and algorithmic work~\citep{garg2022what, hu2023planning, kim2024MFD, zhang2024feature}. Based on this assumption, we define the corresponding linear task transformation space. Without loss of generality, we consider only the regression case. 

\begin{definition}[Linear task transformation space]\label{3.2}
\textit{Let $\mathcal{F} = \{ f: \mathcal{X} \to \mathbb{R} \}$ denote a task function space defined over the input space $\mathcal{X}$. We assume that there exists a sample representation space $\mathcal{M}_\mathcal{F} \subseteq \mathbb{R}^{d}$, together with a mapping $\phi_\mathcal{F}$, such that $\mathcal{M}_\mathcal{F}$ is linear with respect to $\mathcal{X}$ and each label set $\mathcal{Y}_f = \{ f(\rvx) \mid \rvx \in \mathcal{X} \}, \ \forall f \in \mathcal{F}$. A linear task transformation space is then defined as a linear functional space $\mathcal{T}=\{ t: \mathcal{M}_\mathcal{F} \to \mathbb{R} \}$, equipped with a mapping $\psi:\mathcal{F} \to \mathcal{T}$ such that for any $f \in \mathcal{F}$, $\psi(f)=t$ satisfying 
\begin{equation}
f(\rvx) = t(\phi_\mathcal{F}(\rvx)), \quad \forall \rvx \in \mathcal{X}.
\end{equation}}
\end{definition}
Building upon this foundation, our key theoretical contribution is that we find the task transformation space can be modeled as the dual space of the sample representation space. See Appendix~\ref{appendix_proof_1} for the proof.

\begin{proposition}[Task-sample duality]\label{3.3}
\textit{ Let $\mathcal{X}$ be the input space and $\mathcal{Y}_f$ the multiple label sets corresponding to each task $f \in \mathcal{F}$. Under Definition~\ref{3.2}, there exists a linear sample representation space $\mathcal{M}_\mathcal{F}$ and a linear task transformation space $\mathcal{T}$, where $\mathcal{T}$ is the dual space of $\mathcal{M}_\mathcal{F}$, i.e. $\mathcal{T}=\mathcal{M}_\mathcal{F}^*$.}
\end{proposition}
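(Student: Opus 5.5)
The plan is to construct the pair $(\mathcal{M}_\mathcal{F},\mathcal{T})$ explicitly and then show that the two inclusions $\mathcal{T}\subseteq\mathcal{M}_\mathcal{F}^*$ and $\mathcal{M}_\mathcal{F}^*\subseteq\mathcal{T}$ both hold, after possibly shrinking $\mathcal{M}_\mathcal{F}$ or enlarging $\mathcal{T}$. We start from the sample representation space $\mathcal{M}_\mathcal{F}\subseteq\mathbb{R}^d$ and the map $\phi_\mathcal{F}$ whose existence is assumed in Definition~\ref{3.2}. For each $f\in\mathcal{F}$ there is a linear functional $t_f=\psi(f)$ with $f(\rvx)=t_f(\phi_\mathcal{F}(\rvx))$. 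Since every $t_f$ is a linear map $\mathcal{M}_\mathcal{F}\to\mathbb{R}$, the inclusion $\mathcal{T}\subseteq\mathcal{M}_\mathcal{F}^*$ is immediate. We then take $\mathcal{T}$ to be the linear span of $\{t_f : f\in\mathcal{F}\}$; it is still a linear functional space, and $\psi$ maps into it.

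For the reverse inclusion we need every linear functional on $\mathcal{M}_\mathcal{F}$ to lie in $\mathcal{T}$. We do not expect this to hold for an arbitrary $\mathcal{M}_\mathcal{F}$, so the key step is to replace $\mathcal{M}_\mathcal{F}$ by a minimal space. Let $U=\mathrm{span}\{t_f\}\subseteq\mathcal{M}_\mathcal{F}^*$, and let $U^{\circ}=\bigcap_{f}\ker t_f$ be its annihilator in $\mathcal{M}_\mathcal{F}$. There are two natural options:
\begin{enumerate}
    \item restrict to $\mathcal{M}'=(U^{\circ})^{\perp}$ using the inner product, with $\phi'=P\circ\phi_\mathcal{F}$, where $P$ is the orthogonal projection onto $\mathcal{M}'$;
    \item equivalently, use the quotient $\mathcal{M}_\mathcal{F}/U^{\circ}$.
\end{enumerate}
Because each $t_f$ vanishes on $U^{\circ}$, we have $t_f(\phi'(\rvx))=t_f(\phi_\mathcal{F}(\rvx))=f(\rvx)$. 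So $(\mathcal{M}',\phi')$ still satisfies Definition~\ref{3.2}, and it remains a subspace of $\mathbb{R}^d$ with the induced inner product.

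On $\mathcal{M}'$, the restrictions of the $t_f$ have trivial common kernel. By finite-dimensional duality, $\dim U|_{\mathcal{M}'}=\dim\mathcal{M}'-\dim(\text{common kernel})=\dim\mathcal{M}'$. Hence $U|_{\mathcal{M}'}=(\mathcal{M}')^*$, which gives $\mathcal{T}=\mathcal{M}'^*$.

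As a complement, we will invoke the Riesz representation theorem stated in the Dual space preliminaries. It identifies each $t_f$ with a vector $\omega_f\in\mathcal{M}'$ satisfying $f(\rvx)=\langle\omega_f,\phi'(\rvx)\rangle$. The argument above then says the $\omega_f$ span $\mathcal{M}'$, and we can choose a basis from them together with the corresponding dual basis.

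We expect the main obstacle to be this reverse inclusion. It requires the observation that "there exists" in the statement lets us pick the minimal representation space, so that no direction in $\mathcal{M}_\mathcal{F}$ is invisible to all tasks. We also need to make sure the projection preserves the linear-measurement property for every task simultaneously. If $\mathcal{F}$ is infinite, we will note that $U$ is still a subspace of the finite-dimensional $\mathcal{M}_\mathcal{F}^*$. It is therefore spanned by finitely many $t_{f_1},\dots,t_{f_k}$, and the annihilator argument goes through unchanged.
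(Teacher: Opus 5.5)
Your argument is correct, but it takes a different route from the paper's. Your forward inclusion is the same as the paper's Step 1.

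For the reverse inclusion, the paper keeps the given $\mathcal{M}_\mathcal{F}$ and $\phi_\mathcal{F}$. It takes an arbitrary $t'\in\mathcal{M}_\mathcal{F}^*$, forms $f_{t'}=t'\circ\phi_\mathcal{F}$, and asserts that because $f_{t'}$ ``has the form of a task'' it belongs to $\mathcal{F}$. It then uses the claim that $\phi_\mathcal{F}(\mathcal{X})$ spans $\mathcal{M}_\mathcal{F}$ to conclude $\psi(f_{t'})=t'$. This gives a stronger conclusion: every functional on the original $\mathcal{M}_\mathcal{F}$ is the image of an actual task. But it relies on two hypotheses that Definition~\ref{3.2} never states: that $\mathcal{F}$ is closed under all linear readouts of $\phi_\mathcal{F}$, and that $\mathcal{M}_\mathcal{F}=\operatorname{span}\phi_\mathcal{F}(\mathcal{X})$. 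Without them the conclusion can fail. If $\mathcal{F}$ is a single task and $\dim\mathcal{M}_\mathcal{F}>1$, then $\operatorname{span}\psi(\mathcal{F})$ is one-dimensional and cannot equal $\mathcal{M}_\mathcal{F}^*$. This is exactly the obstruction you anticipated.

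You instead use the existential quantifier. You shrink to $\mathcal{M}'=(U^{\circ})^{\perp}$, taken inside $\mathcal{M}_\mathcal{F}$, which is simply $\operatorname{span}\{\omega_f\}$, and you let $\mathcal{T}$ be the span of the task functionals. The annihilator dimension count then gives $\mathcal{T}=(\mathcal{M}')^{*}$ with no richness assumption on $\mathcal{F}$. The cost is that $\mathcal{T}$ contains combinations of tasks that need not lie in $\mathcal{F}$, and that $\phi_\mathcal{F}$ is replaced by $P\circ\phi_\mathcal{F}$.

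One thing your construction does not give is $\operatorname{span}\phi'(\mathcal{X})=\mathcal{M}'$. The paper uses this property, including as the ``without loss of generality'' step in Theorem~\ref{3.6}. If you want it as well:
\begin{enumerate}
\item First restrict to $S=\operatorname{span}\phi_\mathcal{F}(\mathcal{X})$.
\item Set $\mathcal{M}''=S\cap(U^{\circ}\cap S)^{\perp}$ and $\phi''=P''\circ\phi_\mathcal{F}$, with $P''$ the orthogonal projection onto $\mathcal{M}''$.
\end{enumerate}
The common kernel of the task functionals on $\mathcal{M}''$ is again trivial, so their restrictions span $(\mathcal{M}'')^{*}$. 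Each $t_f$ is unchanged on $\phi''(\mathcal{X})$, and the projected image now spans $\mathcal{M}''$.
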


\begin{definition}[Task representation space]\label{3.4}
\textit{Under Proposition~\ref{3.3}, for each task $f \in \mathcal{F}$, $\psi(f) \in \mathcal{T}$ admits a unique Riesz representation $\omega_f$. The task representation space $\mathcal{W}_{\mathcal{F}}$ is defined as the set of all such Riesz representations. Then for any $f \in \mathcal{F}$, we have
\begin{equation}
f(\rvx) = \langle \omega_{f}, \phi_\mathcal{F}(\rvx) \rangle, \quad \forall \rvx \in \mathcal{X}.
\end{equation}
}
\end{definition}
In summary, we map various nonlinear ICL tasks to vectors in the task representation space, leveraging the linear representation hypothesis, the dual-space formulation, and the Riesz representation theorem. From the above formulation, we can further define basis representations and basis transformations.

\begin{definition}[Basis task representations]
\textit{Under Proposition~\ref{3.3}, let $\{m_1,\ldots,m_d\}$ be a basis of the sample representation space $\mathcal{M}_{\mathcal{F}}$, and let $\{t_1,\ldots,t_d\}$ be the corresponding dual basis of the task transformation space $\mathcal{T}$. 
The basis task representations are defined as the Riesz representations of $\{t_1,\ldots,t_d\}$, denoted by $\{\omega_1,\ldots,\omega_d\}$, which satisfy
\begin{equation}
\langle \omega_i, m_j \rangle = \delta_{ij}, \quad 1 \leq i,j \leq d.
\end{equation}}
\end{definition}
Thus, every sample representation $\phi_\mathcal{F}(\rvx)$ can decompose uniquely as  
\(
    \phi_\mathcal{F}(\rvx)=\sum_{i=1}^d \alpha_i(\rvx)m_i,
\)  
and every task representation $\omega_f$ can decompose uniquely as  
\(
    \omega_f=\sum_{j=1}^d \beta_j \omega_j.
\)  
The output can be given by  
\[
    \langle \omega_f,\,\phi_\mathcal{F}(\rvx)\rangle = \sum_{i=1}^d \alpha_i(\rvx) \beta_i.
\]
Our next Theorem~\ref{3.6} shows that, under the dual-space modeling framework, a sufficient set of ICL tasks guarantees a basis-covering sample representation space. 
See Appendix~\ref{appendix_proof_2} for the proof.

\begin{theorem}[Completeness of basis representations under task traversal]\label{3.6}
\textit{Under Proposition~\ref{3.3}, we assume that a learner with sample representation mapping $\phi_\theta$ is presented with a task traversal curriculum $\mathcal{C}$ such that: $\operatorname{span}\bigl\{t\mid t\in\mathcal{C}\bigr\}= \mathcal{T}.$ Then, if the learner achieves zero empirical error, the learned representation mapping $\phi_\theta$ satisfies: $\operatorname{span}\bigl\{\phi_{\theta}(\rvx)\mid \rvx\in\mathcal{X}\bigr\}=\mathcal{M}_\mathcal{F}$.}
\end{theorem}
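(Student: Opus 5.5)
The plan is to argue by contradiction, using only linear duality in the finite-dimensional space $\mathcal{M}_\mathcal{F}$. Throughout we identify the learned representation with the one in Proposition~\ref{3.3}. Zero empirical error on every task in $\mathcal{C}$ then means that for each $t\in\mathcal{C}$ and each $\rvx\in\mathcal{X}$,
\[
f(\rvx)=t(\phi_\theta(\rvx)),
\]
with $\phi_\theta(\rvx)\in\mathcal{M}_\mathcal{F}$. Define $S=\operatorname{span}\{\phi_\theta(\rvx)\mid \rvx\in\mathcal{X}\}\subseteq\mathcal{M}_\mathcal{F}$. Suppose $S$ were a proper subspace. The first step is to show that the curriculum could then never certify the missing directions, which forces a contradiction with $\operatorname{span}\mathcal{C}=\mathcal{T}$.

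Concretely, if $S\subsetneq\mathcal{M}_\mathcal{F}$, pick a nonzero vector $u\in S^{\perp}$. Its Riesz functional $t_u=\langle u,\cdot\rangle$ lies in $\mathcal{T}=\mathcal{M}_\mathcal{F}^*$ by Proposition~\ref{3.3}. This functional is nonzero on $\mathcal{M}_\mathcal{F}$, since $t_u(u)=\|u\|^2$, but it vanishes identically on $S$. Because $\operatorname{span}\mathcal{C}=\mathcal{T}$, we can write $t_u=\sum_k c_k t_k$ with each $t_k\in\mathcal{C}$. Each $t_k$ corresponds to a task $f_k$ with $f_k(\rvx)=t_k(\phi_\theta(\rvx))$. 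It follows that the task $\sum_k c_k f_k$, read through the learned representation, is the zero function on all of $\mathcal{X}$.

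The key step is to turn this into a contradiction using the defining property of $\mathcal{M}_\mathcal{F}$ from Definition~\ref{3.2}: the representation space is exactly the one needed to express all tasks, so the reference map $\phi_\mathcal{F}$ spans $\mathcal{M}_\mathcal{F}$. Otherwise we could pass to the span and shrink $\mathcal{M}_\mathcal{F}$, and we take $\mathcal{M}_\mathcal{F}$ minimal in this sense. Under that convention, distinct functionals in $\mathcal{T}$ induce distinct task functions on $\mathcal{X}$, so the map $\mathcal{T}\to\{\text{functions on }\mathcal{X}\}$ is injective. Since the learner agrees with the true tasks on $\mathcal{C}$, and hence by linearity on $\operatorname{span}\mathcal{C}=\mathcal{T}$, a functional that vanishes on $S$ must give the zero task. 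Injectivity then forces $t_u=0$, contradicting $t_u(u)\neq0$. Therefore $S^\perp=\{0\}$ and $S=\mathcal{M}_\mathcal{F}$.

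An equivalent dimension-counting formulation is the following. The restriction map $\mathcal{T}\to S^*$ has kernel equal to the annihilator $S^{0}$, whose dimension is $d-\dim S$. Zero error on a spanning curriculum makes this restriction map injective, so $S^0=0$ and $\dim S=d$.

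I expect the main obstacle to be this injectivity step: justifying that zero empirical error on the curriculum actually pins down each functional on all of $\mathcal{M}_\mathcal{F}$ rather than only on $S$. This depends on reading Definition~\ref{3.2} as requiring that $\mathcal{M}_\mathcal{F}$ carries no redundant directions, meaning distinct tasks in $\mathcal{T}$ are distinguishable on $\mathcal{X}$. I would state that minimality convention explicitly and then run the annihilator argument above.
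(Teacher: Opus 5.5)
Your argument is correct and uses the same two ingredients as the paper's proof. The first is extending the zero-error identity $t(\phi_\theta(\rvx))=t(\phi_{\mathcal F}(\rvx))$ from $\mathcal{C}$ to $\operatorname{span}\mathcal{C}=\mathcal{T}$ by linearity. The second is the convention $\mathcal{M}_{\mathcal F}=\operatorname{span}\{\phi_{\mathcal F}(\rvx)\mid\rvx\in\mathcal{X}\}$, which the paper also imposes explicitly as a ``without loss of generality'' step, so the obstacle you flagged is resolved exactly as you intend.

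Where you differ is in which separation property does the work. The paper argues directly: substituting the dual basis functionals $t_i$ into the extended identity gives $\alpha_i(\phi_\theta(\rvx))=\alpha_i(\phi_{\mathcal F}(\rvx))$ for all $i$. Hence $\phi_\theta(\rvx)=\phi_{\mathcal F}(\rvx)$ pointwise (functionals separate vectors), and the span equality is immediate from the convention. You never identify $\phi_\theta$ with $\phi_{\mathcal F}$. Instead you take $u\in S^\perp$ and use the chain $t_u\circ\phi_{\mathcal F}=\sum_k c_k\,t_k\circ\phi_{\mathcal F}=\sum_k c_k\,t_k\circ\phi_\theta=t_u\circ\phi_\theta=0$. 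Combined with the spanning of $\phi_{\mathcal F}(\mathcal{X})$ (vectors separate functionals), this forces $t_u=0$. Writing this chain out would also tidy the sentence where you define $f_k$ through $\phi_\theta$ instead of $\phi_{\mathcal F}$.

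The paper's route is shorter and gives the stronger exact-recovery conclusion. That strength, however, comes from assuming the learner reads out with the very same functionals $t$. Your annihilator/dimension-counting viewpoint proves only the span claim, but a rank argument in its spirit survives if the learner uses its own readouts $\hat t$ with $\hat t\circ\phi_\theta=t\circ\phi_{\mathcal F}$. The $d$-dimensional space $\{t\circ\phi_{\mathcal F}\mid t\in\mathcal{T}\}$ then lies inside $\{\ell\circ\phi_\theta\mid \ell\in S^*\}$, giving $\dim S\ge d$. Pointwise recovery fails in that setting, since $\phi_\theta$ is then determined only up to an invertible linear map, as the paper's appendix itself observes.
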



It can be expected that, by promoting a basis-covering sample representation space, 
our dual-space modeling framework could enhance ICL capability. We empirically validate this in Section~\ref{regression}.


\subsection{ICL Formulation under Dual-Space Modeling}
In this section, we formulize ICL under our dual-space modeling framework.

\begin{definition}[Context-induced task representation in ICL]\label{3.8} 
\textit{In the ICL setting, the task representation can be specified jointly by two components: (1) context demonstration of labeled examples $\rvz_{1:n} = (\rvz_1,\dots,\rvz_n)$ with $\rvz_i = (\rvx_i, \ry_i) \in \mathcal X \times \mathcal Y$, and (2) a representation mapping $\phi:\mathcal X \to \mathbb R^d$. That is
\begin{equation}
    \omega_f \;\triangleq\; \omega_f(\rvz_{1:n},\, \phi).
\end{equation}}
\end{definition}
Definition~\ref{3.8} formalizes that in the ICL setting, the task specified by a prompt is determined by its context portion. Thus, the encoding space of context naturally serves as the task representation space. We further show that existing theoretical analyses of ICL based on LSA~\citep{zhang2023trained, kim2024MFD, wumany} are fully compatible with our framework, from which we can derive a closed form of $\omega_f$.

\begin{proposition}[Closed form of $\omega_f$ under simplified LSA]\label{3.9} 
\textit{Consider an LSA layer applied after a feature encoder $\phi:\mathcal X \to \mathbb R^d$ implemented by an MLP. Suppose the LSA projection matrices $W_{KQ}$ and $W_{OA}$ are initialized such that
\[
W_{OV}=\begin{pmatrix}
\ast & \ast\\
 0_d^\top & 1
\end{pmatrix},\qquad
W_{KQ}=
\begin{pmatrix}
 \Theta &  0_d\\
 0_d^\top & \ast
\end{pmatrix}.
\]
Then the final prediction takes the form $\hat{\ry}=\langle \omega_f(\rvz_{1:n}, \phi),\,\phi(\rvx_q)\rangle$, where
\begin{equation}
\omega_f(\rvz_{1:n}, \phi)
    = \frac{1}{n}\sum_{i=1}^n \ry_i \Theta^\top \phi(\rvx_i).
\end{equation}
}
\end{proposition}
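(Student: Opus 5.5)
The plan is a direct computation of the bottom-right entry of $\mathrm{LSA}(Z)$, with the embedding matrix built from encoded features. Set
\[
Z=\begin{pmatrix}\phi(\rvx_1)&\cdots&\phi(\rvx_n)&\phi(\rvx_q)\\ \ry_1&\cdots&\ry_n&0\end{pmatrix}\in\mathbb{R}^{(d+1)\times(n+1)}.
\]
The prediction is $\hat\ry=[Z]_{d+1,n+1}+\frac1n e_{d+1}^\top W_{OV}ZZ^\top W_{KQ}Ze_{n+1}$. The skip term vanishes because the query label slot is $0$. I will also follow the standard convention of \citet{zhang2023trained} and let $ZZ^\top$ range over the $n$ context columns only, so that the query token does not contaminate the task estimate. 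The final step then reads $\hat\ry$ as an inner product in the sense of Definition~\ref{3.4}.

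\textbf{Step 1: the value side.} The last row of $W_{OV}$ is $(0_d^\top,1)$, so $e_{d+1}^\top W_{OV}Z=(\ry_1,\dots,\ry_n,0)$. The starred blocks of $W_{OV}$ only affect the other output coordinates, so they drop out.

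\textbf{Step 2: the query side.} The block structure of $W_{KQ}$ gives
\[
W_{KQ}Ze_{n+1}=\begin{pmatrix}\Theta\phi(\rvx_q)\\ \ast\cdot 0\end{pmatrix}=\begin{pmatrix}\Theta\phi(\rvx_q)\\ 0\end{pmatrix}.
\]
The unknown bottom-right entry of $W_{KQ}$ is harmless because it multiplies the zero query label.

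\textbf{Step 3: combine.} Multiplying the row $(\ry_1,\dots,\ry_n,0)$ by $Z^\top$ gives the row vector $\bigl(\sum_i \ry_i\phi(\rvx_i)^\top,\ \sum_i \ry_i^2\bigr)$. Its last entry meets the zero last coordinate from Step 2, so
\[
\hat\ry=\frac1n\sum_{i=1}^n\ry_i\,\phi(\rvx_i)^\top\Theta\,\phi(\rvx_q)=\Bigl\langle \frac1n\sum_{i=1}^n\ry_i\Theta^\top\phi(\rvx_i),\ \phi(\rvx_q)\Bigr\rangle .
\]
This is exactly $\langle\omega_f(\rvz_{1:n},\phi),\phi(\rvx_q)\rangle$ with the stated $\omega_f$. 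The vector depends only on the context and on $\phi$, consistent with Definition~\ref{3.8}, and it plays the role of the Riesz representation of the induced linear functional $v\mapsto\langle\omega_f,v\rangle$ on the sample space.

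\textbf{Main obstacle.} The algebra is routine. The only delicate point is the bookkeeping that removes the query token and the starred entries. If the query column is kept in $ZZ^\top$, an extra term $\ry_q\phi(\rvx_q)$ appears, but it has coefficient $\ry_q=0$ in the value row, so it vanishes anyway. I will check this explicitly, which makes the result hold under either convention. I also need to make sure "initialized" is read as the structure being preserved, or simply fixed, at inference time.
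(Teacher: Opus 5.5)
Your proposal is correct and follows the same route as the paper. The paper's proof just cites \citet{kim2024MFD} for the identity $\hat{\ry}=\frac{1}{n}\sum_{i=1}^n \ry_i\,\phi(\rvx_i)^\top\Theta\,\phi(\rvx_q)$ and reads off $\omega_f$ by moving $\Theta$ across the inner product, whereas your block computation derives that identity directly: the last row $(0_d^\top,1)$ of $W_{OV}$ isolates the labels, and the zero query label removes both the starred entry of $W_{KQ}$ and the query column's contribution to $ZZ^\top$, which makes the argument self-contained.
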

Proposition~\ref{3.9} can explain the effectiveness of LSA simplification in analyzing ICL: it implicitly performs our dual-space modeling between the task representation space and the sample representation space. However, we argue that \textit{it fails to capture the entanglement of standard Transformers encoding progress}, which use the original, unsimplified SA. As we will show in the next theorem, SA cannot realize such dual-space modeling. 

\begin{figure}[t]
	\vspace{-0.25in}
	\begin{center}
		\includegraphics[width=0.9\textwidth]{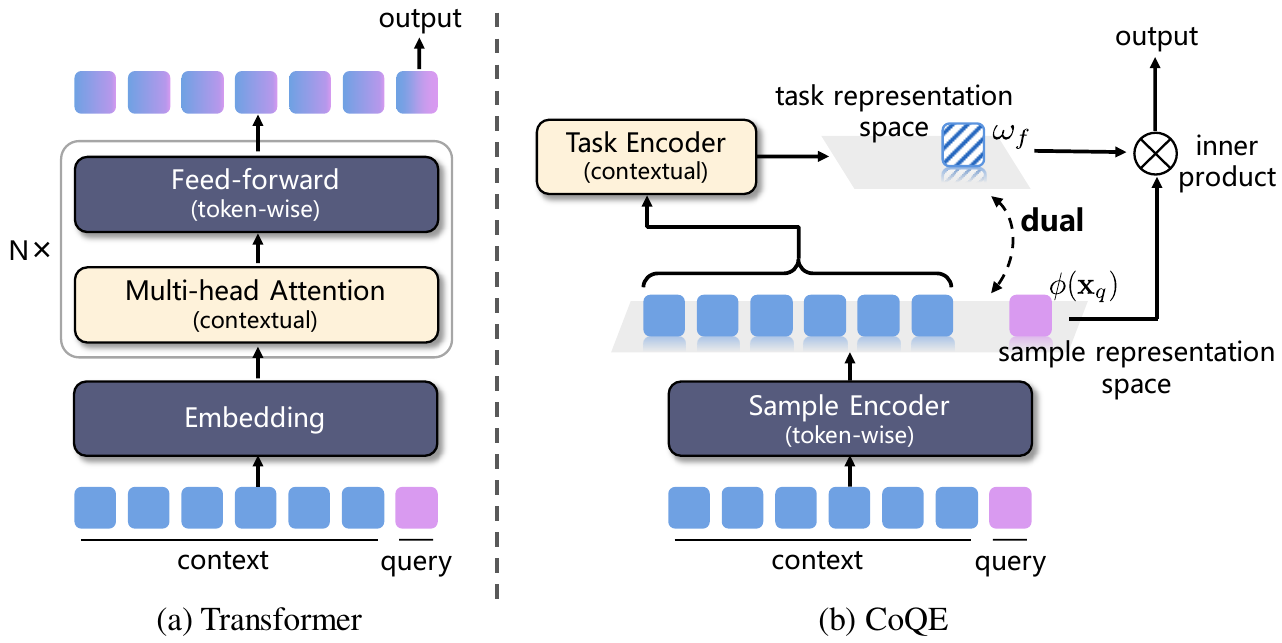}
	\end{center}
	\vspace{-8pt} 
	\caption{Comparison of Transformer and CoQE architectures. Unlike the Transformer, which encodes both context-level and sample-level information into the same representation space, CoQE implements dual representation spaces encoding that explicitly distinguishes between context and samples.
	}
	\label{method}
	\vskip -0.2in
\end{figure}

\begin{theorem}[Entangled structure under general SA]\label{3.10} 
For a standard SA model with softmax-based attention weights, there does NOT exist a pair of $\phi_0$ and $\omega_0(\rvz_{1:n}, \phi_0)$, such that the model prediction admits the following decomposition:
\begin{equation}
\hat{\ry}_q = \langle \omega_0(\rvz_{1:n}, \phi_0),\, \phi_0(\rvx_q) \rangle.
\end{equation}
\end{theorem}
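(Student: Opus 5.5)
I will argue by contradiction, working with the same embedding as in Proposition~\ref{3.9}. The query column is $(\phi(\rvx_q), 0)$ and the context columns are $(\phi(\rvx_i), \ry_i)$. The prediction is the bottom-right entry of $\mathrm{SA}(Z)$. Writing $v_j$ for the scalar obtained from the last row of $W_O W_V$ applied to column $j$, and $s_j = (W_K z_j)^\top W_Q z_q/\sqrt{d_k}$, this is
\[
\hat{\ry}_q=\sum_{j=1}^{n+1}\frac{\exp(s_j)}{\sum_{k}\exp(s_k)}\,v_j .
\]
Here each $s_j$ is bilinear in $(z_j, z_q)$, and $v_j$ is affine in $z_j$.

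The key structural fact I will use is about any decomposition $\hat{\ry}_q=\langle \omega_0(\rvz_{1:n},\phi_0),\phi_0(\rvx_q)\rangle$ with a finite-dimensional $\phi_0:\mathcal X\to\mathbb R^{d}$. Such a decomposition forces the following: for \emph{every} finite collection of contexts $\rvz^{(1)},\dots,\rvz^{(m)}$, the functions $\rvx_q\mapsto \hat{\ry}_q(\rvz^{(k)},\rvx_q)$ span a space of dimension at most $d$. Equivalently, viewed as a kernel in (context, query), $\hat{\ry}_q$ has rank at most $d$. So the plan is to exhibit a standard SA (generic weights, and the statement is read as ``for general SA'') and a family of contexts for which these query-functions span an infinite-dimensional space. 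That contradicts every finite $d$ and every choice of $\phi_0$.

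For the construction I will take the simplest case, $n=1$, with a nonzero query--key interaction on the $\rvx$-block. I will also arrange that the query column contributes no value, or a constant value. Then
\[
\hat{\ry}_q=\sigma\big(a(\rvx_1)^\top \rvx_q + c(\rvx_q)\big)\,v(\rvz_1),
\]
where $\sigma$ is the logistic function, since with two columns the softmax reduces to a sigmoid of the score difference. Varying $\rvx_1$ along a line gives a family $\sigma(\lambda u^\top \rvx_q + b(\rvx_q))$ indexed by $\lambda\in\mathbb R$, each multiplied by a nonzero value factor.

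The crux is to show that $\{\sigma(\lambda t+b)\}_{\lambda}$, as functions of $t=u^\top\rvx_q$, are linearly independent for infinitely many distinct $\lambda$. I will prove this by analytic continuation. Suppose a finite combination $\sum_k c_k\sigma(\lambda_k t+b)$ vanishes identically. Each term is meromorphic in $t$, with poles at $t=(i\pi(2m+1)-b)/\lambda_k$. For distinct $|\lambda_k|$ these pole sets are different, so the pole nearest the origin among those belonging to the largest $|\lambda_k|$ cannot be cancelled, which forces $c_k=0$. An alternative is asymptotics in $t\to\infty$ along suitable complex rays.

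I expect the main obstacle to be exactly this independence step, together with one further point. I must make sure that the assumed $\phi_0$ is arbitrary and possibly nonlinear. The rank argument handles this cleanly, since it never uses any form of $\phi_0$ beyond its finite dimension. This is also the place to contrast with Proposition~\ref{3.9}: removing the softmax makes the kernel bilinear, hence of finite rank.
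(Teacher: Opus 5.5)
Your argument is correct in outline and follows the paper's skeleton. Both proofs argue by contradiction from the fact that a decomposition with $\phi_0:\mathcal X\to\mathbb R^d$ confines every query-function $\rvx_q\mapsto\hat{\ry}_q$ to a fixed $d$-dimensional span. Both reduce to $n=1$, where the two-column softmax is a logistic function, and then exhibit an infinite linearly independent family indexed by the context. Your key lemma, however, is different.

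The paper varies the \emph{offset}. It holds $a=a_1-a_2$ fixed and lets the context move $t(\rz)=\exp(b_2-b_1)$, which produces translates $1/(1+te^{-a\rx})$. After substituting $s=e^{-a\rx}$ these are rational functions of one variable with distinct poles $-1/t_k$, so independence is pure algebra: clear denominators and evaluate at $s=-1/t_k$.

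You vary the \emph{slope}, which produces dilates $\sigma(\lambda t+b)$ with no common exponential variable, so you need the identity theorem plus residues. Your pole argument is right: for real $b$, the nearest poles of the term with the largest $|\lambda_k|$ lie at distance $\sqrt{\pi^2+b^2}/|\lambda_k|$, strictly closer to the origin than any pole of the other terms. Restricting to distinct $|\lambda_k|$ is genuinely needed, since $\sigma(\lambda t)+\sigma(-\lambda t)=1$.

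The trade-off is this. The paper gets an elementary independence proof. Your family is the one the bilinear score actually produces when the context moves. In the plain embedding $\rvz_q=[\rvx_q,0]$, the paper's offsets $b_i=\langle k_i,W_Q r_q\rangle/\sqrt{d_k}$ vanish identically, so its family collapses unless a positional term $r_q\neq 0$ is added. The paper must also shift $b_1$ while freezing $a_1$, even though both come from the same key $k_1$.

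One step needs tightening; the paper glosses over the same point by treating the query-position slope as fixed by the context. The query's self-score $\rvz_q^\top W_K^\top W_Q\rvz_q/\sqrt{d_k}$ makes your $b=b(\rvx_q)$ a generally quadratic function of $\rvx_q$, so the constant-$b$ pole argument does not apply verbatim. Since you choose the instance, you can remove this term:
\begin{itemize}
\item For $d_x\ge 2$, take the $\rvx$-block $\Theta$ of $W_K^\top W_Q$ skew-symmetric, with no label cross-terms and a linear feature map. Then the self-score vanishes, and $\rvx_1=\lambda e$ gives logit $\lambda(\Theta^\top e)^\top\rvx_q$.
\item For $d_x=1$, zero the $\rx$-block and keep only the entry pairing the key's label coordinate with the query's $\rx$-coordinate. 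Then vary $\ry_1$ instead of $\rx_1$.
\end{itemize}
Alternatively, restrict $\rvx_q$ to a level set of $b$ on which $u^\top\rvx_q$ still sweeps an interval.
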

From our dual-space modeling perspective, Theorem~\ref{3.10} formalizes the entangled nature of how Transformers encode context and sample-level information. See Appendix~\ref{appendix_proof_5} for the proof. We posit that this entanglement is the underlying reason for the observed conflict between ICL and IWL.

\section{CoQE: A Transformer with Separate Context-Query Encoding}

We have established a principled modeling framework for the task representation space 
and the sample representation space. Next, we propose a straightforward yet effective architectural modification: CoQE, a Transformer with separate \textbf{Co}ntext-\textbf{Q}uery \textbf{E}ncoding.

The CoQE model consists of two modules: a shared sample encoder $\phi_\theta$ and a dedicated task encoder $\omega_\theta$, as shown in Figure~\ref{method} (b). The sample encoder generates general-purpose representations for all samples, including the query. We implement it with a token-wise module, for it should process samples independently without considering context. The task encoder, on the other hand, operates on the general representations of the context and focuses on producing the representation of the current task. Thus this module should be contextual and has the capability to condense sequential information. Finally, the prediction output is obtained by computing the inner product between the task representation and the query sample representation. Taking the regression task as an example, the formalization of CoQE output is as follows:
\begin{equation}
\hat{\ry}_q = \langle \ \omega_\theta\,(\,\phi_\theta(\rvz_{1:n})\,),\, \phi_\theta(\rvx_q) \ \rangle.
\end{equation}

Figure~\ref{method} compares the architectures of the Transformer and CoQE. The Transformer also contains token-wise components like feed-forward networks, and contextual components like multi-head attention modules. When stacked, these modules collectively exhibit contextual behavior, and the final token output intertwines with the context information in a complex manner during the forward pass. In contrast, CoQE explicitly separates the contextual and token-wise parts, which are responsible for learning the task representation space and the sample representation space, respectively. The two spaces interact through a well-defined inner product according to the Riesz representation theorem.

We aim to evaluate our model on regression tasks for ICL performance, and few-shot classification tasks for ICL-IWL performance. In the following, we will give the specific implementation of CoQE. Notably, due to their different properties, the task encoder constructs the task representation space in distinct ways. 

\begin{wrapfigure}{r}{0.5\linewidth}   
		\vspace{-8pt}                      
	\centering
	\includegraphics[width=\linewidth]{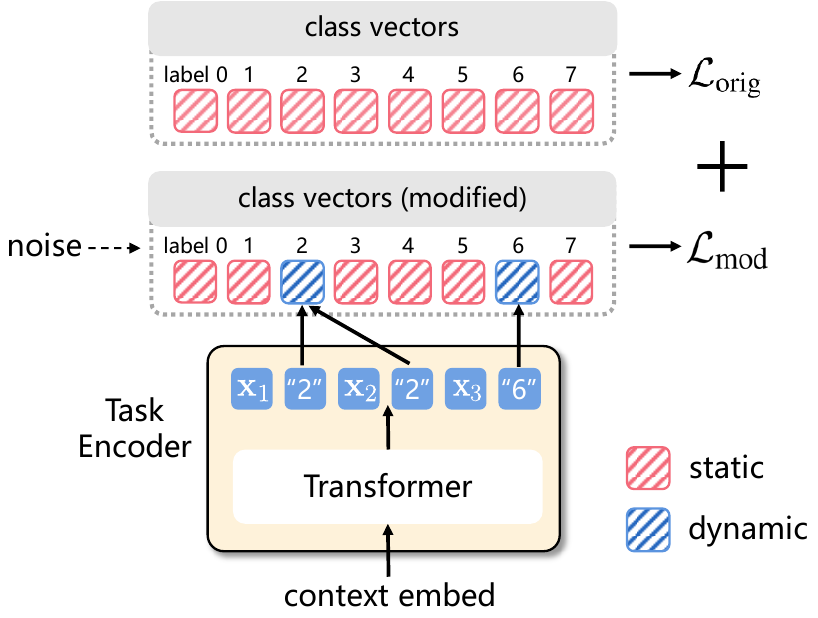}
	\caption{Construction and training of the task Representation space for few-shot classification.}
	\label{cls_method}
	\vskip -0.2in
\end{wrapfigure}

\vspace{-2pt}

\subsection{Implementation for Regression } 
\vspace{-2pt}
We employ a two-layer ReLU network as the sample encoder of CoQE, and a GPT-2–style Transformer as the task encoder. We take the final output token of the task encoder directly as the task representation induced by the context. The regression output is then computed as the inner product between it and the query sample representation. For fair comparison, the baseline Transformer is also equipped with the same two-layer ReLU embedding module. See Appendix~\ref{appendix_reg} for more implementation details.

\subsection{Implementation for Few-shot Classification}

We use a ResNet as CoQE's sample encoder, and a Transfromer as the task encoder. The task representation space is constructed in a distinct way from regression. A multi-class classification task can be regarded as a collection of sub-tasks that identify each class. Thus, we let it correspond to a set of task representations, each of which is associated with one class. ICL requires producing the task representations corresponding to the classes in the context, whereas IWL requires static memorization of all classes. To construct a task representation set compatible with both, we assign a parameterized vector to each class, representing a static version of its task representation.

In each forward pass, the classes appearing in the context are encoded by the task encoder to obtain their corresponding task representations, as illustrated in Figure~\ref{cls_method}. These dynamic vectors replace the corresponding static class vectors, and modified class vectors are used to compute logits for prediction. The resulting training loss is denoted as $\mathcal{L}_{\text{mod}}$. Additionally, to accelerate the training of the static class vectors, we compute an additional set of logits from the unmodified class vectors during training, with the resulting classification loss denoted as $\mathcal{L}_{\text{orig}}$. These logits are not used during testing. Therefore, the total training loss is $\mathcal{L}_{\text{mod}} + \mathcal{L}_{\text{orig}}$. During experiments, we observed that $\mathcal{L}_{\text{mod}}$ tends to converge to $\mathcal{L}_{\text{orig}}$, which means the task encoder fails to dynamically encode the context over training, and the learning of the task representation space is restricted to the set of static class vectors. To prevent the task representation space from collapsing during training, we add Gaussian noise \(\epsilon \) to the modified logits during training, with the variance increasing over training steps. The initial noise follows $\mathcal{N}(\mu_0, 1)$. This trick can be interpreted as indirectly performing random sampling in the task representation space, that is, $ \langle \omega_f, \phi(\rvx) \rangle + \epsilon = \langle \omega_f + \epsilon', \phi(\rvx) \rangle$. Through it we maintain a dynamic task representation space. See Section~\ref{cls} for experimental results and ablation studies. 

\vspace{-2pt}
\section{Experiments}
\vspace{-2pt}
\label{exp}
In this section, we first evaluate the ICL performance of CoQE on regression tasks. Then we evaluate ICL-IWL performance on the synthetic classification task and our newly designed pseudo-arithmetic task. 

\subsection{Regression}
\label{regression}
\paragraph{Setup.}
We adopt a general framework for training models to perform ICL over a function class $\mathcal{F}$. To construct training prompts, we first sample a task function $f \sim \mathcal{D}^{\text{train}}_{\mathcal{F}}$, then draw $k$ i.i.d.\ inputs $\rvx_1, \dots, \rvx_{k} \sim \mathcal{D}^{\text{train}}_{\mathcal{X}}$. The prompt is formed as
\(
\mathcal{P} = (\rvx_1, f(\rvx_1), \dots, \rvx_k, f(\rvx_k)).
\)
Let $\mathcal{P}^i$ denotes the prefix containing the first $i$ input-output examples and the $(i+1)$th input: $\mathcal{P}^i = (\rvx_1, f(\rvx_1), \dots, \rvx_i, f(\rvx_i), \rvx_{i+1}).$ The training objective of a model $\mathbb{M}_{\theta}$ minimizes the expected loss over all possible prefixes:
\[
\min_{\theta} \ \mathbb{E}_\mathcal{P} \left[ \frac{1}{k} \sum_{i=0}^{k-1} \ell\big( \mathbb{M}_{\theta}(\mathcal{P}^i), \ f(\rvx_{i+1}) \big) \right],
\]
where $\ell(\cdot, \cdot)$ is a mean squared error (MSE) loss function. At test time, we first sample a test function $f \sim \mathcal{D}^{\text{test}}_{\mathcal{F}}$, then draw $j \leq k - 1$ inputs $\rvx_1, \dots, \rvx_{j} \sim \mathcal{D}^{\text{test}}_{\mathcal{X}}$, and \( \rvx_q \) from \( \mathcal{D}_{\text{query}} \) to construct the test prompt: $\mathcal{P}^j_{\text{test}} = (\rvx_1, f(\rvx_1), \dots, \rvx_j, f(\rvx_j), \rvx_{q}).$ Then we measure the MSE between \( \mathbb{M}_{\theta}(\mathcal{P}^j_{\text{test}}) \) and \( f(\rvx_q) \).

\begin{figure}[t]
\vspace{-0.25in}
\begin{center}
\includegraphics[width=\textwidth]{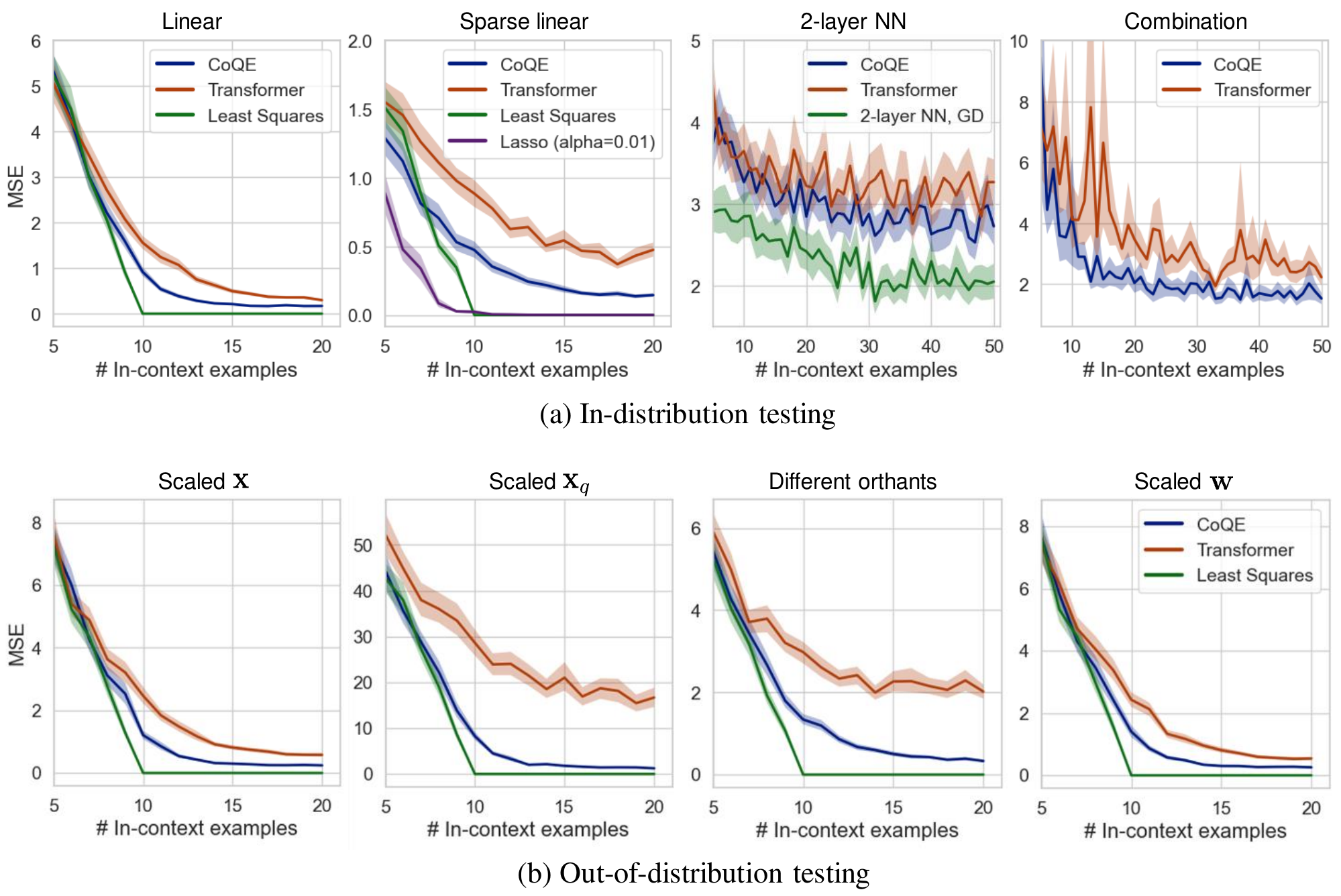}
\end{center}
\vspace{-8pt}
\caption{Results of ICL regression. We provide optimal baselines for test settings except for combination functions. CoQE consistently achieves lower ICL error than the Transformer in both ID and OOD scenarios.}
\label{reg_result}
\vskip -0.2in
\end{figure}

To compare ICL capability of CoQE with the standard Transformer, we consider two major evaluation scenarios: in-distribution (ID) testing and out-of-distribution (OOD) testing. For ID testing, we set \( \mathcal{D}^{\text{train}}_\mathcal{X} = \mathcal{D}^{\text{test}}_\mathcal{X} = \mathcal{D}_{\text{query}}\), and $\mathcal{D}^{\text{train}}_{\mathcal{F}}=\mathcal{D}^{\text{test}}_{\mathcal{F}}$. Specifically, we use the following four classes of functions \( \mathcal{F} \): linear functions, sparse linear functions, two-layer ReLU networks and combination functions. The latter two classes of nonlinear functions allow the model to reduce ICL difficulty by learning task-invariant representations. Through them, we can empirically validate Theorem~\ref{3.6}, which shows the benefits of dual-space modeling for representation learning. For OOD testing, we consider four different cases of distribution shifts under linear functions. See Appendix~\ref{appendix_reg} for more setup details.

\paragraph{Results.} 
In the ID scenario, CoQE consistently achieves lower ICL error than the Transformer (Figure~\ref{reg_result} (a)). For regression on more challenging combination functions, the Transformer exhibits substantial fluctuations, whereas CoQE attains much smaller error variance. We attribute this to CoQE’s more effective learning of the sample representation space, and present further results in Appendix~\ref{appendix_reg}. In the OOD scenario, CoQE also achieves substantially lower error than the Transformer across all four tested cases (Figure~\ref{reg_result} (b)). Notably, the second case is adapted from~\citet{mittaldoes}, who similarly aims to enforce the model to explicitly learn task variables. However, they found no improvement in OOD performance, contrary to our results. This indicates that simply introducing task variables is insufficient and highlights the value of our proposed dual-space modeling and corresponding architecture design.

\subsection{Synthetic Few-shot Classification}
\label{cls}


\paragraph{Baselines.} Besides standard Transformers, we introduce two more baselines that were proposed to alleviate the tension between ICL and IWL: $L_2$ regularization~\citep{singh2023transient} and probabilistic temporary forgetting~\citep{ananddual}.

\paragraph{Results.} 

Figure~\ref{all} top right presents the ICL and IWL accuracies of Transformers and CoQE under various factors after $100$k training steps. The Transformers fluctuate between ICL and IWL capabilities across different conditions, whereas our models robustly occupy the upper-right blue region, indicating a Pareto improvement in both abilities. The other two baselines perform even worse than the standard Transformer; for clarity, we omit them from the figure, and report the detailed results in Appendix~\ref{appendix_cls}.
Figure~\ref{cls_result} shows the learning curves under different values of $P_{\text{bursty}}$ and Zipfian exponent. We could observe that CoQE's ICL accuracy rises rapidly at the beginning, but declines slightly between $10$k and $30$k steps. This behavior aligns with prior findings on
ICL strategy: it emerges quickly and then gradually fades~\citep{singh2023transient}. However, under our algorithm, the model quickly restrains this fading trend and continues to recover steadily. We also discuss the effect of parameter scale, showing that CoQE outperforms the standard Transformer under the same parameter budget, and its performance continues to improve as the model size increases. All detailed results are presented in Appendix~\ref{appendix_cls}.

\begin{figure}[t]
\vspace{-0.25in}
\begin{center}
\includegraphics[width=\textwidth]{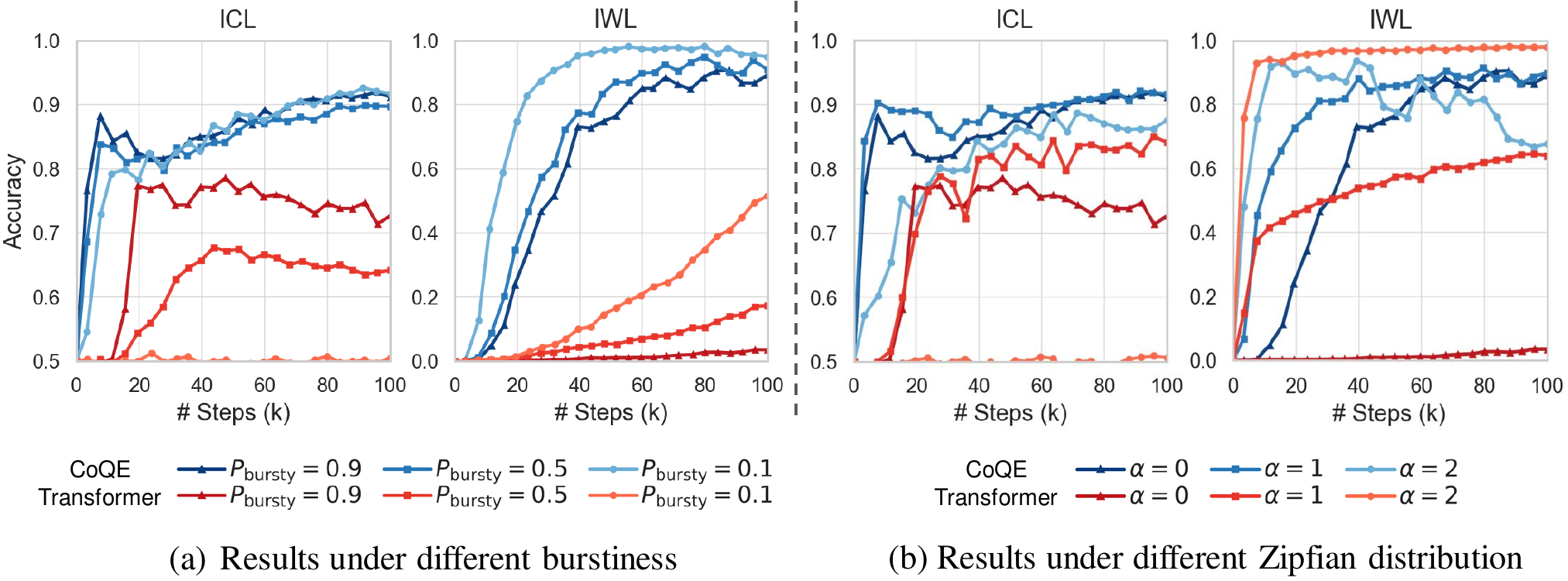}
\end{center}
\vspace{-8pt}
\caption{Learning curves of synthetic few-shot classification under different data distribution factors. }
\label{cls_result}
 \vskip -0.2in
\end{figure}

\paragraph{Ablation study.} 

We study the effect of Gaussian noise on the model performance, as shown in Table~\ref{tab:noise}. Without any noise, the model's ICL ability ultimately yields entirely to IWL. 
When $\mu_0 = 5$, the model achieves maximal ICL performance while retaining high IWL capability. This is the default noise magnitude used in our experiments. See Appendix~\ref{appendix_cls} for more analysis and details.


\subsection{Extending to Language Model Token Embeddings}

\begin{table}[t]
	\centering
	\label{tab:combined}
	\renewcommand{\arraystretch}{1.2}
	\small
	
	\begin{minipage}[t]{0.4\linewidth}
		\centering
		\caption{Results under different noises. We use $\mu_0 = 5$ as default noise magnitude.}
		\label{tab:noise}
		\begin{tabular}{lcc}
			\toprule
			& ICL & IWL \\
			\midrule
			Noise-free & 55.12 & 99.62 \\
			$\mu_0=3$ & 81.91 & 95.31 \\
			\rowcolor{gray!20} $\mu_0=5$ & 91.15 & 89.30 \\
			$\mu_0=7$ & 88.22 & 77.70 \\
			$\mu_0=9$ & 86.01 & 72.62 \\
			\bottomrule
		\end{tabular}
	\end{minipage}
	\hspace{0.03\linewidth}
	\begin{minipage}[t]{0.5\linewidth}
		\centering
		\caption{Results of using language token embeddings.}
		\label{tab:language}
		\begin{tabular}{l l S[table-format=1.2] S[table-format=1.2]}
			\toprule
			Token Embedding & Model & ICL & IWL \\
			\midrule
			\multirow{2}{*}{Llama-3.1-8B} & Transformer & 50.12 & 96.65 \\
			& \cellcolor{gray!20} CoQE & \cellcolor{gray!20} 65.35 & \cellcolor{gray!20} 93.96 \\
			\midrule
			\multirow{2}{*}{Llama-3.2-3B} & Transformer & 49.33 & 98.81 \\
			& \cellcolor{gray!20} CoQE & \cellcolor{gray!20} 75.01 & \cellcolor{gray!20} 96.57 \\
			\midrule
			\multirow{2}{*}{Llama-3.2-1B} & Transformer & 49.95 & 99.12 \\
			& \cellcolor{gray!20} CoQE & \cellcolor{gray!20} 80.08 & \cellcolor{gray!20} 97.34 \\
			\bottomrule
		\end{tabular}
	\end{minipage}
\end{table}
To further verify the generality of our approach and explore its potential extension to natural language processing (NLP) tasks, we are inspired by~\citet{singh2023transient} and extend our few-shot classification experiments from images to token embeddings of large language models (LLMs). we construct a fixed token embedding dataset through selection and clustering for token embedding matrix of Llama series~\citep{dubey2024llama}, see Appendix~\ref{appendix_llama} for details. Compared with Omniglot images, this dataset has larger intra-class variability and preserves meaningful semantic information relevant to NLP tasks. We fix the training data distribution parameters at $P_{\text{bursty}} = 0.9$ and Zipfian exponent $ \alpha = 1$, to approximate natural language distributions. The results in Table~\ref{tab:language} show that the Transformer almost fails to acquire ICL ability, consistent with~\citet{singh2023transient}. In contrast, CoQE achieves substantially better ICL capability while maintaining strong IWL performance. We believe these results demonstrate that our dual spaces encoding can enhance ICL capability and mitigate the ICL--IWL conflict in NLP scenarios where semantic information is richer. 

\subsection{Conditional Pseudo-Arithmetic Task}

\begin{figure}[t]
	\begin{center}
		\includegraphics[width=\textwidth]{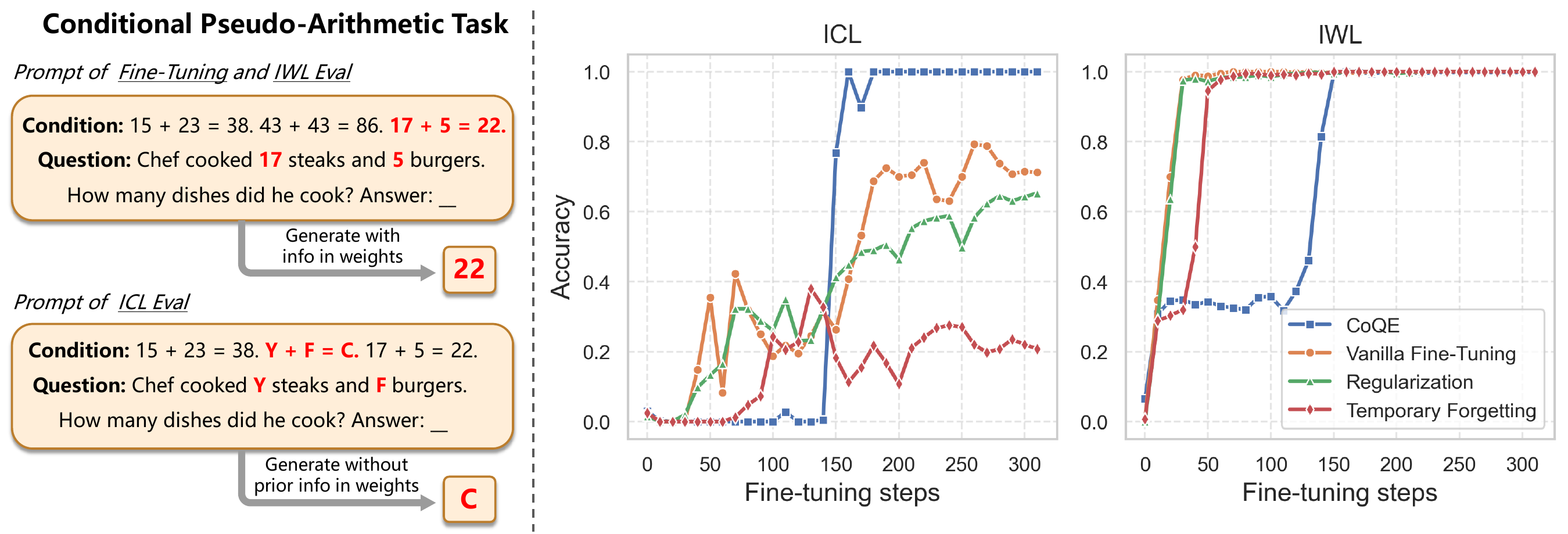}
	\end{center}
	\vspace{-8pt}
	\caption{(Left) Overview of our conditional pseudo-arithmetic task. (Right) Training curves of CoQE and three baselines. The IWL accuracy of CoQE improves more slowly, because its learning paradigm differs from the model’s original autoregressive pre-training. However, after fine-tuning, all methods achieve nearly $100\%$ IWL performance, and CoQE achieves substantially higher ICL accuracy.}
	\label{arith_task}
	\vskip -0.2in
\end{figure}

To further verify the generalization ability of our algorithm on generative tasks and pretrained models, we designed a conditional pseudo-arithmetic task, inspired by~\citet{ananddual} and~\citet{chen-etal-2025-exploring}. We construct sequences consisting of arithmetic conditions followed by a word problem, as shown in Figure~\ref{arith_task}. Here, the question can be answered solely based on the given conditions. We fine-tune GPT-2 on 5,000 such sequences and evaluate its accuracy on the same distribution, which we interpret as its IWL performance. After fine-tuning, we then construct sequences that mix standard arithmetic with “pseudo-arithmetic” over letters. In this case, the model must rely on the contextual relation given in the sequence (rather than memorized weights) to answer correctly, which we use to measure ICL performance. In this setup, $\rvx_i$ and $\rvx_q$ are generalized from single tokens to sequence-level inputs.

We extend CoQE to the pretrained GPT-2 as follows. During fine-tuning, we modify the attention mask so that tokens in the Question field cannot attend to the Condition field, thereby implementing separate Context–Query Encoding. On the output side, we largely follow the implementation for few-shot classification. The difference is that we do not introduce $\mathcal{L}_{\text{orig}}$ or noise injection, because we observe that fine-tuning pretrained GPT-2 on this dataset is fast and stable. Besides standard fine-tuning, we also introduce $L_2$ regularization and probabilistic temporary forgetting as baselines. All models are fine-tuned for 300 steps. The training curves are shown in Figure~\ref{arith_task} right.

After fine-tuning, all methods achieve nearly $100\%$ IWL performance, while CoQE achieves substantially higher ICL accuracy than the other three baselines. A typical failure mode we observe is using the wrong condition, for example: "Condition: 42 + 4 = 46. 31 + 20 = 51. L + Y = E. Question: I read L pages yesterday and Y pages today. How many pages did I read? Answer: 51." This indicates that the model is still limited in leveraging the correct in-context relation in this scenario. Moreover, $L_2$ regularization and probabilistic temporary forgetting perform even worse than vanilla fine-tuning. This demonstrate limitations of their design: regularization can take effect at very large training steps (e.g., $10^7$), and probabilistic temporary forgetting targets ICL settings that involve random-vector inputs, which differs from our pseudo-arithmetic setting. Overall, this simple autoregressive experiment further validates the effectiveness and transferability of CoQE on generative tasks and pretrained models.

\section{Discussion}

Our analyses and experiments primarily focus on small models. A natural question is whether large models exhibit similar imbalance between ICL and IWL.~\citet{piantadosi2014zipf} showed that a Zipfian distribution of \( \alpha = 1 \) closely approximates the empirical distribution of natural language, which serves as a sweet spot for the tradeoff between ICL and IWL~\citep{chan2022data}. Nevertheless, \citet{chantoward} pointed out that LLMs can still experience conflicts between ICL and IWL when demonstration examples deviate from the training distribution. Beyond this, the growing demand for multimodal large models (e.g., VLMs, VLAs) trained on increasingly diverse data distributions, as well as the emergence of new architectures, makes it less tenable to rely on the fortunate coincidence of natural language statistics to balance ICL and IWL. These considerations underscore the importance of our exploration in this direction.

%

\textbf{Limitations.} Theoretically, a limitation of our work is that our modeling is built upon the linear representation hypothesis. Although this assumption also underlies many other theoretical and empirical studies, its applicability to complex NLP tasks like multi-step reasoning still requires further evidence. Another limitation is that our dual-space formulation currently assumes that the answer $\ry_q$ is a single value rather than a sequence. Extending the theoretical modeling to handle open-ended generation tasks is valuable.

Emperically, the main limitation is that the evaluation of our algorithm has so far been confined to structural synthetic experiments. For tasks where such a context–query decomposition is not available or not well defined, our current algorithm does not yet provide an effective way to apply dual-space modeling. Extending our algorithm to broader general NLP or multimodal settings is also an important direction for future work.
 

\subsubsection*{Broader Impact Statement}
Reconciling ICL and IWL under diverse conditions is crucial for the robustness of large models as data distributions grow more varied and architectures evolve. This work aims to enhance generalization and reliability of large models, contributing to safer and more equitable deployment in real-world applications.

\subsubsection*{Acknowledgments}
This work was supported in part by the National Key Research and Development Program of China (No. 2024YDLN0006),  in part by the National Key Research and Development Program of China under STI 2030—Major Projects (No. 2021ZD0200300), and in part by the Tsinghua-Fuzhou Data Technology Joint Research Institute (Project No. JIDT2024013). 

\bibliography{main}
\bibliographystyle{tmlr}
\clearpage
\appendix

\section{Proofs of Theoretical Results}
\label{appendix_proof}
\subsection{Proof of Proposition~\ref{3.3}}
\label{appendix_proof_1}
For ease of presentation, we first restate the proposition and then introduce its proof.

\begin{proposition}[Task-sample duality]
\textit{Let $\mathcal{X}$ be the input space and $\mathcal{Y}_f$ the multiple label sets corresponding to each task $f \in \mathcal{F}$. Under Definition~\ref{3.2}, there exists a linear sample representation space $\mathcal{M}_\mathcal{F}$ and a linear task transformation space $\mathcal{T}$, where $\mathcal{T}$ is the dual space of $\mathcal{M}_\mathcal{F}$, i.e. $\mathcal{T}=\mathcal{M}_\mathcal{F}^*$.}
\end{proposition}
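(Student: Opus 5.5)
The plan is to build both spaces directly from Definition~\ref{3.2} and then show that the task transformation space can be taken to be exactly the dual of the sample representation space. By Definition~\ref{3.2} there is a finite-dimensional inner product space $\mathcal{M}_\mathcal{F}\subseteq\mathbb{R}^d$ and a map $\phi_\mathcal{F}$ such that every $f\in\mathcal{F}$ satisfies $f(\rvx)=t_f(\phi_\mathcal{F}(\rvx))$ for some linear functional $t_f:\mathcal{M}_\mathcal{F}\to\mathbb{R}$. So every element of $\mathcal{T}$ is already a linear functional on $\mathcal{M}_\mathcal{F}$, which gives $\mathcal{T}\subseteq\mathcal{M}_\mathcal{F}^*$ for free. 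The substance of the proof is the reverse inclusion, which may require adjusting the choice of $\mathcal{M}_\mathcal{F}$ and $\mathcal{T}$. The statement only asks that such a pair exist, so this adjustment is allowed.

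First, I replace $\mathcal{M}_\mathcal{F}$ by the subspace $\mathcal{M}' = \operatorname{span}\{\phi_\mathcal{F}(\rvx)\mid \rvx\in\mathcal{X}\}$. This is still a finite-dimensional inner product space, it still contains the image of $\phi_\mathcal{F}$, and restricting each $t_f$ to it keeps the identity $f(\rvx)=t_f|_{\mathcal{M}'}(\phi_\mathcal{F}(\rvx))$. Second, I define $\mathcal{T}$ to be the full linear span of $\{t_f|_{\mathcal{M}'} : f\in\mathcal{F}\}$. If that span is not yet all of $\mathcal{M}'^*$, I shrink $\mathcal{M}'$ further by quotienting out (equivalently, taking the orthogonal complement of) the common kernel $K=\bigcap_{f}\ker t_f$.

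The key linear-algebra step is then the following. Set $\mathcal{M}''=K^{\perp}\cap\mathcal{M}'$, with the mapping $\phi''=P_{\mathcal{M}''}\circ\phi_\mathcal{F}$, where $P_{\mathcal{M}''}$ is orthogonal projection onto $\mathcal{M}''$.
\begin{itemize}
\item The equation $f=t_f\circ\phi''$ still holds, because each $t_f$ vanishes on $K$.
\item The restricted functionals $\{t_f|_{\mathcal{M}''}\}$ have trivial common kernel in $\mathcal{M}''$.
\item In finite dimensions, a family of functionals whose annihilator is $\{0\}$ spans the whole dual. This follows from $\dim \operatorname{span}\{t_f\}=\dim\mathcal{M}''-\dim\bigcap\ker t_f$.
\end{itemize}
Hence $\mathcal{T}:=\operatorname{span}\{t_f\}=\mathcal{M}''^*$. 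This pair $(\mathcal{M}'',\mathcal{T})$ is the required linear sample representation space and its dual task space. Each element of $\mathcal{T}$ then has a Riesz representative in $\mathcal{M}''$, which is consistent with Definition~\ref{3.4}.

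I expect the main obstacle to be interpretive rather than technical. Definition~\ref{3.2} lets $\mathcal{T}$ be any linear functional space, and the proposition could be read as claiming that the given $\mathcal{T}$ already equals the dual. I would make explicit that $\mathcal{T}$ is taken as the linear span of the task functionals, so that it is a vector space. I would also check that replacing $\phi_\mathcal{F}$ by its projection still qualifies as a learnable mapping in the sense of Definition~\ref{3.1}. It does, since composing with a fixed linear map keeps it parameterizable. The dimension-counting identity itself is standard and needs no further work.
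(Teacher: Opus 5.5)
Your argument is correct, but it runs in the opposite direction from the paper's.

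\textbf{The paper's route.} The paper keeps $\mathcal{M}_\mathcal{F}$ and $\phi_\mathcal{F}$ fixed and proves mutual inclusion. The inclusion $\mathcal{T}\subseteq\mathcal{M}_\mathcal{F}^*$ is immediate. For the reverse, it takes an arbitrary $t'\in\mathcal{M}_\mathcal{F}^*$ and forms $f_{t'}=t'\circ\phi_\mathcal{F}$. It then treats $f_{t'}$ as a member of $\mathcal{F}$ because it ``has the form of a task.'' Finally, it uses Riesz uniqueness, together with the assumption that $\phi_\mathcal{F}(\mathcal{X})$ spans $\mathcal{M}_\mathcal{F}$, to identify $t'$ with $\psi(f_{t'})\in\mathcal{T}$. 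In effect the paper enlarges the task side. This rests on an unstated hypothesis that $\mathcal{F}$ contains every function of the form $t'\circ\phi_\mathcal{F}$.

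\textbf{Your route.} You keep $\mathcal{F}$ fixed and shrink the representation side. You pass first to the span of the image, then to the orthogonal complement of the common kernel of the restricted task functionals. Your key lemma is the annihilator count $\dim\operatorname{span}\{t_f\}=\dim\mathcal{M}''-\dim\bigcap_f\ker t_f$, rather than Riesz uniqueness.

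\textbf{What each buys.} The paper's route gives duality for the originally given space and map. That is the form Theorem~\ref{3.6} later uses, where $\mathcal{M}_\mathcal{F}$ is taken to be the span of the image ``without loss of generality.'' Your route assumes nothing beyond Definition~\ref{3.2}, and it uses the existential quantifier in the statement exactly where it is permitted. Its resulting $\mathcal{T}$ consists of linear combinations of genuine tasks. A curriculum with $\operatorname{span}\mathcal{C}=\mathcal{T}$, as required in Theorem~\ref{3.6}, can therefore actually be drawn from $\mathcal{F}$. The price is that $\phi_\mathcal{F}$ is replaced by $P_{\mathcal{M}''}\circ\phi_\mathcal{F}$, and $\mathcal{M}_\mathcal{F}$ loses any directions that no task reads.

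Your interpretive caution is also well placed. Under the loosest reading of Definition~\ref{3.2}, where $\mathcal{T}$ is any linear space of functionals containing $\psi(\mathcal{F})$, the statement is immediate by setting $\mathcal{T}=\mathcal{M}_\mathcal{F}^*$. Taking $\mathcal{T}$ to be the span of the task functionals is what makes the proposition say something non-vacuous.
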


\begin{proof}
To prove the proposition, we must show that the linear task transformation space $\mathcal{T}$ is equivalent to the dual space of the linear sample representation space $\mathcal{M}_\mathcal{F}$, denoted as $\mathcal{M}_\mathcal{F}^*$. The proof proceeds by demonstrating mutual inclusion: (1) $\mathcal{T} \subseteq \mathcal{M}_\mathcal{F}^*$ and (2) $\mathcal{M}_\mathcal{F}^* \subseteq \mathcal{T}$.

\paragraph{Step 1: Proof of $\mathcal{T} \subseteq \mathcal{M}_\mathcal{F}^*$.} Let $t$ be an arbitrary element in the task transformation space $\mathcal{T}$. According to Definition~\ref{3.2}, $t$ is a linear function such that $$ t(m) = \langle \omega_t, m \rangle, \quad \forall m \in \mathcal{M}_\mathcal{F}. $$ Since $t$ is a linear functional on $M_\mathcal{F}$, it is by definition an element of $M_\mathcal{F}^*$. As $t$ was an arbitrary element of $\mathcal{T}$, it follows that every element in $\mathcal{T}$ corresponds to a unique linear functional in $M_\mathcal{F}^*$. Thus, we have established that $\mathcal{T} \subseteq \mathcal{M}_\mathcal{F}^*$.

\paragraph{Step 2: Proof of $\mathcal{M}_\mathcal{F}^* \subseteq \mathcal{T}$.} Conversely, let $t'$ be an arbitrary linear functional in the dual space $\mathcal{M}_\mathcal{F}^*$. By Definition~\ref{3.1}, $\mathcal{M}_\mathcal{F}$ is a finite-dimensional inner product space. By the Riesz representation theorem, for any linear functional $t' \in \mathcal{M}_\mathcal{F}^*$, there exists a unique vector, let's call it $\omega_{t'} \in \mathcal{M}_\mathcal{F}$, such that for all $m \in \mathcal{M}_\mathcal{F}$:
    $$ t'(m) = \langle \omega_{t'}, m \rangle. $$
Now, let us define a function $f_{t'}: \mathcal{X} \to \mathbb{R}$ using this functional $t'$:
    $$ f_{t'}(\rvx) = t'(\phi_\mathcal{F}(\rvx)) = \langle \omega_{t'}, \phi_\mathcal{F}(\rvx) \rangle .$$
This function $f_{t'}$ has the exact mathematical form of a task function as specified in Definition~\ref{3.2}. Therefore, $f_{t'}$ can be considered a valid task belonging to the task function space $\mathcal{F}$. Definition~\ref{3.2} states that for any such task $f_{t'} \in \mathcal{F}$, there exists a unique linear task representation vector, which we denote $\omega_f$, that represents it. This means:
    $$ f_{t'}(\rvx) = \langle \omega_f, \phi_\mathcal{F}(\rvx) \rangle. $$
By equating the two expressions for $f_{t'}(\rvx)$, we obtain:
    $$ \langle \omega_{t'}, \phi_\mathcal{F}(\rvx) \rangle = \langle \omega_f, \phi_\mathcal{F}(\rvx) \rangle, \quad \forall \rvx \in \mathcal{X} $$
This implies that $\langle \omega_{t'} - \omega_f, m \rangle = 0$ for all $m$ in the image of $\phi_\mathcal{F}$. Since the sample representation space $\mathcal{M}_\mathcal{F}$ is spanned by the image of $\phi_\mathcal{F}$, this condition holds for all $m \in \mathcal{M}_\mathcal{F}$. The only vector orthogonal to every vector in an inner product space is the zero vector. Therefore:
    $$ \langle \omega_{t'} - \omega_f, m \rangle = 0 \implies \omega_{t'} = \omega_f. $$
Since $\omega_f$ corresponds to an element of $\mathcal{T}$, it follows that $\omega_{t'}$ is also corresponds to an element of $\mathcal{T}$, and thus $t' \in \mathcal{T}$. As our choice of $t'$ was arbitrary, we have shown that every linear functional in $\mathcal{M}_\mathcal{F}^*$ corresponds to an element in $\mathcal{T}$. Thus, we have established that $\mathcal{M}_\mathcal{F}^* \subseteq \mathcal{T}$.
\end{proof}

\subsection{Proof of Theorem~\ref{3.6}}
\label{appendix_proof_2}
For ease of presentation, we first restate the theorem and then introduce its proof.

\begin{theorem}[Completeness of basis representations under task traversal]
\textit{Under Proposition~\ref{3.3}, we assume that a learner with sample representation mapping $\phi_\theta$ is presented with a task traversal curriculum $\mathcal{C}$ such that: $\operatorname{span}\bigl\{t\mid t\in\mathcal{C}\bigr\}= \mathcal{T}.$ Then, if the learner achieves zero empirical error, the learned representation mapping $\phi_\theta$ satisfies: $\operatorname{span}\bigl\{\phi_{\theta}(\rvx)\mid \rvx\in\mathcal{X}\bigr\}=\mathcal{M}_\mathcal{F}$.}
\end{theorem}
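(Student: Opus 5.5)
We argue by contradiction via orthogonal complements, in the spirit of Step 2 of the proof of Proposition~\ref{3.3}. Set $S \triangleq \operatorname{span}\{\phi_\theta(\rvx)\mid \rvx\in\mathcal{X}\}$. We interpret ``zero empirical error'' as follows: for every $t\in\mathcal{C}$, with target task $f_t$ satisfying $f_t(\rvx)=t(\phi_\mathcal{F}(\rvx))$, the learner outputs $f_t(\rvx)=\langle \omega_t,\phi_\theta(\rvx)\rangle$ for all $\rvx\in\mathcal{X}$, where $\omega_t\in\mathcal{M}_\mathcal{F}$ is the Riesz representative from Definition~\ref{3.4}. We also take $\phi_\theta(\mathcal{X})\subseteq\mathcal{M}_\mathcal{F}$, since the learner represents samples in the same ambient space. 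The inclusion $S\subseteq\mathcal{M}_\mathcal{F}$ is then immediate, so only the reverse inclusion needs proof.

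Suppose $S\subsetneq\mathcal{M}_\mathcal{F}$. Since $\mathcal{M}_\mathcal{F}$ is a finite-dimensional inner product space, there is a nonzero $u\in S^\perp\cap\mathcal{M}_\mathcal{F}$. By Proposition~\ref{3.3}, $\mathcal{T}=\mathcal{M}_\mathcal{F}^*$. Hence the functional $t_u(m)=\langle u,m\rangle$ lies in $\mathcal{T}$, and by the span hypothesis $t_u=\sum_k c_k t_k$ for some finitely many $t_k\in\mathcal{C}$. By uniqueness of Riesz representatives, $u=\sum_k c_k\omega_{t_k}$.

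Next we use the zero-error identities. By linearity, for every $\rvx$,
\[
\langle u,\phi_\theta(\rvx)\rangle=\sum_k c_k\langle\omega_{t_k},\phi_\theta(\rvx)\rangle=\sum_k c_k f_{t_k}(\rvx)=\sum_k c_k\langle\omega_{t_k},\phi_\mathcal{F}(\rvx)\rangle=\langle u,\phi_\mathcal{F}(\rvx)\rangle.
\]
The left-hand side is $0$ because $u\perp S$. Therefore $\langle u,\phi_\mathcal{F}(\rvx)\rangle=0$ for all $\rvx$. As in the proof of Proposition~\ref{3.3}, $\mathcal{M}_\mathcal{F}$ is spanned by $\phi_\mathcal{F}(\mathcal{X})$, so $u\perp\mathcal{M}_\mathcal{F}$ and hence $u=0$. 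This is a contradiction, and it gives $S=\mathcal{M}_\mathcal{F}$.

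The main obstacle is not the linear algebra but pinning down what ``zero empirical error'' licenses. The argument needs three things. First, the learner must use the \emph{same} task vectors $\omega_t$ (or at least vectors whose span covers $\mathcal{M}_\mathcal{F}$) paired with $\phi_\theta$. Second, the zero-error identity must hold on all of $\mathcal{X}$, or at least on a subset whose $\phi_\mathcal{F}$-image spans $\mathcal{M}_\mathcal{F}$. Third, $\phi_\mathcal{F}(\mathcal{X})$ must span $\mathcal{M}_\mathcal{F}$; the paper already uses this in Proposition~\ref{3.3}, and we will state it explicitly.

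If the learner instead uses its own task vectors $\tilde\omega_t$, the same argument still works with a different choice of $u$. Take $u\perp S$ nonzero. Zero error gives $\langle\tilde\omega_t,\phi_\theta(\rvx)\rangle=f_t(\rvx)$. The resulting map $t\mapsto f_t$ is injective on $\mathcal{T}$, which forces the functionals $m\mapsto\langle\tilde\omega_t,m\rangle$ restricted to $S$ to realize all of $\mathcal{T}$. That requires $\dim S\ge\dim\mathcal{T}=\dim\mathcal{M}_\mathcal{F}$, and this dimension count is the fallback argument if the Riesz-representative version is deemed too strong.
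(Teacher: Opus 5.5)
Your proof is correct. It shares the paper's core step: zero error on $\mathcal{C}$ extends by linearity to all of $\operatorname{span}\mathcal{C}=\mathcal{T}$, and one then evaluates on a well-chosen functional. The choice of functional is where you differ.

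The paper pairs with the dual basis $t_i$. This gives $\alpha_i(\phi_\theta(\rvx))=\alpha_i(\phi_\mathcal{F}(\rvx))$ for every $i$, i.e.\ the stronger pointwise identification $\phi_\theta(\rvx)=\phi_\mathcal{F}(\rvx)$, and span equality is read off from that. You pair with the single functional $\langle u,\cdot\rangle$, $u\perp S$, and get span equality by contradiction without identifying $\phi_\theta$.

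The three assumptions you flag are exactly the ones the paper uses tacitly. It writes $t(\phi_\theta(\rvx))$, expands $\phi_\theta(\rvx)$ in $\{m_i\}$, and sets $\mathcal{M}_\mathcal{F}=\operatorname{span}\phi_\mathcal{F}(\mathcal{X})$ ``without loss of generality''. Your observation that zero error need only hold on points whose $\phi_\mathcal{F}$-image spans $\mathcal{M}_\mathcal{F}$ also repairs the paper's silent passage from ``all training $\rvx$'' to ``all $\rvx\in\mathcal{X}$''.

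Your fallback is genuinely more general than the paper's argument. Stated carefully: the map $t\mapsto f_t=t\circ\phi_\mathcal{F}$ is linear and injective, so $\operatorname{span}\{f_t: t\in\mathcal{C}\}$ is $d$-dimensional. It lies inside the space of functions $\rvx\mapsto\langle v,\phi_\theta(\rvx)\rangle$, which has dimension $\dim S$. Hence $\dim S\ge d$, and with $S\subseteq\mathcal{M}_\mathcal{F}$ this gives $S=\mathcal{M}_\mathcal{F}$.

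This covers a learner with its own task vectors $\tilde\omega_t$, which is the realistic case for CoQE's learned task encoder. There one only gets $\phi_\theta=H\phi_\mathcal{F}$ for some invertible $H$, matching the paper's own appendix remark on the combination-function experiment.

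So the paper's route buys identifiability under a strong reading of ``zero empirical error''. Your fallback buys the theorem's actual span conclusion under a weaker one.
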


\begin{proof}
By Proposition~\ref{3.3}, fix a basis $\{m_i\}_{i=1}^d$ of $\mathcal{M}_{\mathcal F}$ and its dual basis $\{t_i\}_{i=1}^d\subset\mathcal T$, satisfying
$\,t_i(m_j)=\langle\omega_i,m_j\rangle=\delta_{ij}$.  
For any $m\in\mathcal M_{\mathcal F}$ write the unique decomposition
$m=\sum_{i=1}^d \alpha_i(m)\,m_i$ and for any $t\in\mathcal T$ write $t=\sum_{i=1}^d \beta_i(t)\,t_i$.  
The bilinear pairing then reduces to
\begin{equation}
t(m)=\sum_{i=1}^d \alpha_i(m)\,\beta_i(t).
\label{10}
\end{equation}
Let $\phi_\mathcal F:\mathcal X\to\mathcal M_{\mathcal F}$ denote the sample representation guaranteed by Definition~\ref{3.2}, and define the coordinate vectors
\[
\alpha^\theta(\rvx)\triangleq(\alpha_1(\phi_\theta(\rvx)),\ldots,\alpha_d(\phi_\theta(\rvx)))\in\mathbb R^d,\qquad
\alpha^*(\rvx)\triangleq(\alpha_1(\phi_\mathcal F(\rvx)),\ldots,\alpha_d(\phi_\mathcal F(\rvx)))\in\mathbb R^d .
\]
Zero empirical error on every curriculum task $t\in\mathcal C$ means
\[
t(\phi_\theta(\rvx))=t(\phi_\mathcal F(\rvx))\quad\text{for all }t\in\mathcal C\text{ and all training }\rvx.
\]
By linearity of Equation~\ref{10} this equality holds for any linear combination of curriculum tasks; hence it holds for all $t\in\mathrm{span}(\mathcal C)=\mathcal T$:
\begin{equation}
t(\phi_\theta(\rvx))=t(\phi_\mathcal F(\rvx)),\qquad \forall\,t\in\mathcal T.
\label{11}
\end{equation}
Take in Equation~\ref{11} the particular choice $t=t_i$ (the $i$-th dual basis functional). Using $t_i(m)=\alpha_i(m)$ from Equation~\ref{10}, we obtain for every $\rvx$ and every $i\in[d]$,
\[
\alpha_i(\phi_\theta(\rvx))=t_i(\phi_\theta(\rvx))=t_i(\phi_\mathcal F(\rvx))=\alpha_i(\phi_\mathcal F(\rvx)).
\]
Thus $\alpha^{\theta}(\rvx)=\alpha^{*}(\rvx)$ pointwise for all (training) $\rvx$. Consequently $\phi_\theta(\rvx)$ and $\phi_\mathcal F(\rvx)$ have identical coordinates in the basis $\{m_i\}_{i=1}^d$ for all $\rvx$, so
\[
\mathrm{span}\{\phi_\theta(\rvx)\mid \rvx\in\mathcal X\}
=
\mathrm{span}\{\phi_\mathcal F(\rvx)\mid \rvx\in\mathcal X\}.
\]
Without loss of generality, take $\mathcal M_{\mathcal F}=\mathrm{span}\{\phi_\mathcal F(\rvx)\mid \rvx\in\mathcal X\}$.  
Therefore $\mathrm{span}\{\phi_\theta(\rvx)\mid \rvx\in\mathcal X\}=\mathcal M_{\mathcal F}$, proving the claim.

\end{proof}

\subsection{Proof of Proposition~\ref{3.9}}
\label{appendix_proof_4}
For ease of presentation, we first restate the proposition and then introduce its proof.

\begin{proposition}[Closed form of $\omega_f$ under simplified LSA] 
\textit{Consider an LSA layer applied after a feature encoder $\phi:\mathcal X \to \mathbb R^d$ implemented by an MLP. Suppose the LSA projection matrices $W_{KQ}$ and $W_{OA}$ are initialized such that
\[
W_{OV}=\begin{pmatrix}
\ast & \ast\\
 0_d^\top & 1
\end{pmatrix},\qquad
W_{KQ}=
\begin{pmatrix}
 \Theta &  0_d\\
 0_d^\top & \ast
\end{pmatrix}.
\]
Then the final prediction takes the form $\hat{\ry}=\langle \omega_f(\rvz_{1:n}, \phi),\,\phi(\rvx_q)\rangle$, where
\begin{equation}
\omega_f(\rvz_{1:n}, \phi)
    = \frac{1}{n}\sum_{i=1}^n \ry_i \Theta^\top \phi(\rvx_i).
\end{equation}
}
\end{proposition}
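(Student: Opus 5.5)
The plan is a direct computation of the bottom-right entry of $\mathrm{LSA}(Z)$ once the tokens are replaced by their features. First I fix the embedding matrix. After the feature encoder, the prompt becomes
\[
Z=\begin{pmatrix}\phi(\rvx_1)&\cdots&\phi(\rvx_n)&\phi(\rvx_q)\\ \ry_1&\cdots&\ry_n&0\end{pmatrix}\in\mathbb R^{(d+1)\times(n+1)} .
\]
The prediction is $\hat\ry=[Z]_{d+1,n+1}+\frac1n e_{d+1}^\top W_{OV}ZZ^\top W_{KQ}Z e_{n+1}$. The residual term is $0$ because the query label slot is zero. The last row of $W_{OV}$ is $(0_d^\top,1)$, so $e_{d+1}^\top W_{OV}=e_{d+1}^\top$, and the first factor reduces to the label row $(\ry_1,\dots,\ry_n,0)$ of $Z$.

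Next I compute the right factor $W_{KQ}Ze_{n+1}=W_{KQ}\binom{\phi(\rvx_q)}{0}$. The block structure of $W_{KQ}$, with zero off-diagonal blocks, gives $\binom{\Theta\phi(\rvx_q)}{0}$, and the unspecified $\ast$ entry is annihilated by the zero label. Then
\[
Z^\top\binom{\Theta\phi(\rvx_q)}{0}=\big(\phi(\rvx_i)^\top\Theta\phi(\rvx_q)\big)_{i=1}^{n+1},
\]
with $\rvx_{n+1}:=\rvx_q$. Multiplying by the label row, in which the $(n+1)$-th entry is $0$, removes the query–query term and leaves
\[
\hat\ry=\frac1n\sum_{i=1}^n \ry_i\,\phi(\rvx_i)^\top\Theta\,\phi(\rvx_q)=\Big\langle \frac1n\sum_{i=1}^n\ry_i\Theta^\top\phi(\rvx_i),\ \phi(\rvx_q)\Big\rangle .
\]
Reading off the left argument gives $\omega_f(\rvz_{1:n},\phi)$. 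It depends only on the context and $\phi$, as Definition~\ref{3.8} requires.

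The main point of care is not computational. I need to confirm that the $\ast$ blocks really drop out: the top-right block of $W_{OV}$ does not matter because only the last row of $W_{OV}$ is used, and the $\ast$ entry of $W_{KQ}$ multiplies the zero query label. I also need to state the indexing convention that the prediction is the $(d+1,n+1)$ entry with $1/n$ scaling. I will note that the statement's ``$W_{OA}$'' refers to $W_{OV}$, and that ``initialized'' should be read as ``constrained to'' this block form for the identity to hold.
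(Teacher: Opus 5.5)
Your computation is correct, and it reaches the paper's identity by a more self-contained route. The paper's proof does no calculation: it imports $\hat{\ry}=\frac{1}{n}\sum_{i=1}^n \ry_i\,\phi(\rvx_i)^\top\Theta\,\phi(\rvx_q)$ from \citet{kim2024MFD} and then reads off $\omega_f$ by moving $\Theta$ onto the context side, which is exactly your last step. Your block-matrix derivation makes explicit what the citation hides. Only the last row $(0_d^\top,1)$ of $W_{OV}$ enters, so its $\ast$ blocks are irrelevant; its zero bottom-left block is what stops the query column from contributing a term nonlinear in $\phi(\rvx_q)$. In $W_{KQ}Ze_{n+1}$, the zero bottom-left block of $W_{KQ}$ kills the label coordinate, and the $\ast$ entry only multiplies the zero query label. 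The query--query term drops because the label row of $Z$ ends in $0$. What the citation buys, beyond brevity, is the link to the training setup of that work, which is where the word ``initialized'' comes from. Your argument proves the identity for any weights of the stated block form, so reading ``initialized'' as ``constrained to'' is the right scope for it. If one wants the formula for weights obtained by training from such an initialization, one also needs that the zero blocks are preserved during training. Your computation does not need this under your reading, and the paper leaves it to the cited literature.
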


\begin{proof}
According to~\citet{kim2024MFD}, under the conditions of Proposition~\ref{3.9}, the expression of $\hat \ry$ is given as follows:
\begin{equation}
    \hat \ry= \frac{1}{n}\sum_{i=1}^n \ry_i \phi(\rvx_i)^\top \Theta \phi(\rvx_q).
\end{equation}
Hence, Proposition~\ref{3.9} is readily proved.

\end{proof}

\subsection{Proof of Theorem~\ref{3.10}}
\label{appendix_proof_5}
For ease of presentation, we first restate the theorem and then introduce its proof.

\begin{theorem}[Entangled structure under general SA]
For a standard SA model with softmax-based attention weights, there does NOT exist a pair of $\phi_0$ and $\omega_0(\rvz_{1:n}, \phi_0)$, such that the model prediction admits the following decomposition:
\begin{equation}
\hat{\ry}_q = \langle \omega_0(\rvz_{1:n}, \phi_0),\, \phi_0(\rvx_q) \rangle.
\end{equation}
\end{theorem}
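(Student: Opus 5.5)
We read the statement as a claim about a fixed, finite-dimensional decomposition. It asserts that no map $\phi_0:\mathcal X\to\mathbb R^{d}$ (with $d$ fixed, independent of $n$ and of the context) and no context functional $\omega_0(\rvz_{1:n},\phi_0)\in\mathbb R^d$ can make a generic softmax SA prediction bilinear in (context representation, query representation). The plan is to argue by contradiction, using a concrete one-layer SA instance with the embedding $Z$ from the Preliminaries. The prediction is the bottom-right entry
\[
\hat{\ry}_q=\sum_{i=1}^{n+1} \frac{\exp\!\big(\langle k_i, q\rangle\big)}{\sum_{j}\exp\!\big(\langle k_j, q\rangle\big)}\, v_i ,
\]
where $q=W_Q z_q$, $k_i=W_K z_i$, and $v_i$ is the relevant row of $W_OW_V z_i$. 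For simplicity we take $W_{KQ}$ to act only on the $\rvx$-block and the value to read off $\ry_i$. In a scalar specialization ($d_x=1$, $n=1$, $\ry_1=1$) this gives
\[
\hat{\ry}_q=\frac{e^{a x_1 x_q}}{e^{a x_1 x_q}+e^{a x_q^2}} .
\]

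The key structural step is a rank argument. Suppose the decomposition held. Then for every fixed query the map context $\mapsto \hat{\ry}_q$ is a linear functional of $\omega_0\in\mathbb R^d$. Hence, for any finitely many contexts $c_1,\dots,c_N$ and queries $\rvx^{(1)},\dots,\rvx^{(N)}$, the matrix $M_{jk}=\hat{\ry}_q(c_j,\rvx^{(k)})=\langle\omega_0(c_j),\phi_0(\rvx^{(k)})\rangle$ has rank at most $d$. So we need to exhibit, for every $d$, a family of contexts and queries on which the SA output matrix has rank $>d$.

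In the scalar example, write $\hat{\ry}_q=\sigma\big(a x_q(x_1-x_q)\big)$ with $\sigma$ the logistic function. The function $g(s,t)=\sigma(ast-at^2)$ is real-analytic in $(s,t)$. Fix $t$ near a point and expand in $s$. The $s$-Taylor coefficients $g_k(t)=\partial_s^k g(0,t)$ are, up to scaling, $(at)^k\sigma^{(k)}(-at^2)$. We then show that these coefficient functions $g_0,g_1,\dots$ are linearly independent as functions of $t$. This follows because $\sigma^{(k)}$ is a polynomial of exact degree $k+1$ in $\sigma$, and $\sigma(-at^2)$ is not algebraic in $t$, so no nontrivial finite relation can hold. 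An analytic function $g(s,t)$ admits a finite-rank separation $\sum_{\ell\le d}u_\ell(s)v_\ell(t)$ only if its Taylor coefficient functions span a space of dimension $\le d$. This contradicts the bound of rank $d$ for every $d$. Consequently $g$ is not of finite rank, and no $\phi_0,\omega_0$ exist.

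The main obstacle is making the independence step rigorous and clean. To prove linear independence of $\{(at)^k\sigma^{(k)}(-at^2)\}_k$, I would first try a growth/pole argument. The functions $\sigma^{(k)}(u)$ have poles of exact order $k+1$ at $u=i\pi$. Evaluating along complex $t$ with $-at^2\to i\pi$ separates them by pole order, so any vanishing finite combination must have all coefficients zero.

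A softer alternative is to pick $N=d+1$ explicit points and argue that the determinant of $M$ is a nonzero analytic function of the parameters. One checks it is nonzero at one asymptotic regime, for example $a\to\infty$, where the softmax becomes a hard argmax and $M$ tends to a triangular $0/1$ matrix of full rank. I expect this limiting argument to be the easiest to write out. Choose $x_1$ values $s_1<\dots<s_N$ and queries $t_k>0$ interleaved so that $\mathbb 1[s_j>t_k]$ is triangular. Then full rank of the limit gives full rank for large finite $a$, contradicting rank $\le d$.
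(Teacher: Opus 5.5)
Your proposal is correct, and its top-level logic matches the paper's. A fixed $d$-dimensional $\phi_0$ confines every query-function $\rvx_q\mapsto\hat{\ry}_q$ to a $d$-dimensional span; your rank-$\le d$ evaluation matrices are an equivalent formulation. So it suffices to exhibit context-indexed functions spanning an infinite-dimensional space.

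The difference lies in the witness family and the independence lemma. The paper lets the context move an \emph{offset}. It writes the logits as $a_i^\top\rvx_q+b_i(\rvz)$, with $b_i$ coming from a context-independent query component $r_q$ (e.g.\ positional encodings), and obtains the logistic family $1/(1+te^{-a\rx})$, $t>0$. It then proves independence by clearing denominators and evaluating the resulting polynomial in $e^{-a\rx}$ at $-1/t_k$, which is entirely elementary.

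You stay inside the bare embedding of the Preliminaries, where the context can only change the \emph{slope}, and you keep the query's attention to itself, which produces the self-logit $a\rx_q^2$. This is actually the more faithful computation: the paper's Step 1 treats $k_L=W_K\rvz_q$ as fixed by the context, although it depends on $\rvx_q$. The price is a less elementary independence step, via one of three routes:
\begin{itemize}
\item transcendence of $e^{at^2}$ over $\mathbb{R}(t)$ (the factors $t^k$ alone already separate the terms, so the exact degree of the polynomial expressing $\sigma^{(k)}$ in $\sigma$ is irrelevant);
\item pole orders of $\sigma^{(k)}$ at $i\pi$ plus the identity theorem;
\item a triangular limit.
\end{itemize}

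Two small points should be made explicit when you write it up. First, the claim that a rank-$d$ separation confines the $s$-Taylor coefficients to a $d$-dimensional space needs no smoothness of $\omega_0$. A finite-dimensional space of functions is closed under pointwise limits, and $\partial_s^k g(0,\cdot)$ is a pointwise limit of linear combinations of the $g(s,\cdot)$. Second, in the third route, sending $a\to\infty$ changes the model, whereas you need rank $>d$ for every $d$ at one fixed model. Instead, use the rescaling $(s,t)\mapsto(\lambda s,\lambda t)$, which replaces $a$ by $a\lambda^2$, and let $\lambda\to\infty$ separately for each $N$ at fixed $a$.
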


\begin{proof}
We argue by contradiction. Assume there exists a finite-dimensional feature map $\phi_0$ and a context-only
coefficient vector $\omega_0(\rvz_{1:n},\phi_0)$ such that the identity holds for all contexts and queries.

\paragraph{Step 1: From SA equations to a ratio of exponentials in \(\rvx_q\).}
Let the sequence length be \(L=n{+}1\). Stack token embeddings as
\(Z=[\rvz_1,\dots,\rvz_n,\rvz_q]\in\mathbb{R}^{d\times L}\).
A single-head self-attention (SA) layer computes
\[
Q=W_Q Z,\quad K=W_K Z,\quad V=W_V Z,
\]
with \(Q,K\in\mathbb{R}^{d_k\times L}\), \(V\in\mathbb{R}^{d_v\times L}\).
Denote the \(i\)-th key/value columns by \(k_i:=K_{:i}=W_K \rvz_i\),
\(v_i:=V_{:i}=W_V \rvz_i\), and the query column at position \(q\) by
\(q:=Q_{:L}=W_Q \rvz_q\).
The attention weights for the query position form a probability vector
\(\alpha(q)\in\Delta^{n}\) with coordinates
\begin{equation}
\label{19}
\alpha_i(q)\;=\;\frac{\exp\big(\langle k_i,q\rangle/\sqrt{d_k}\big)}
{\sum_{j=1}^{L}\exp\big(\langle k_j,q\rangle/\sqrt{d_k}\big)}\!,\qquad i=1,\dots,L.
\end{equation}

In the theoretical analysis of ICL, it is common to set $\rvz_q = [\rvx_q, 0]$. Without loss of generality, we assume that the query token embedding depends affinely on the input feature
\(\rvx_q\in\mathbb{R}^{d_x}\):
\[
\rvz_q \;=\; E_x \rvx_q + r_q,
\]
where \(E_x\in\mathbb{R}^{d\times d_x}\) is a fixed embedding matrix and
\(r_q\in\mathbb{R}^{d}\) could collect position encodings and other
context-independent parts at position \(q\). Then the query vector is also affine in \(\rvx_q\):
\[
q \;=\; W_Q \rvz_q \;=\; W_Q E_x\,\rvx_q + W_Q r_q \;=\; U \rvx_q + u_0,
\]
with \(U:=W_QE_x\in\mathbb{R}^{d_k\times d_x}\) and \(u_0:=W_Q r_q\in\mathbb{R}^{d_k}\). Plugging \(q=U\rvx_q+u_0\) into the logits in Equation~\ref{19} yields, for each key \(i\),
\[
\frac{\langle k_i,q\rangle}{\sqrt{d_k}}
= \frac{\langle k_i, U\rvx_q\rangle}{\sqrt{d_k}} + \frac{\langle k_i,u_0\rangle}{\sqrt{d_k}}
= a_i^\top \rvx_q + b_i(\rvz),
\]
where we define the (query–input) slope and the (context) offset by
\[
a_i \;:=\; \frac{U^\top k_i}{\sqrt{d_k}}\in\mathbb{R}^{d_x},\qquad
b_i(\rvz) \;:=\; \frac{\langle k_i,u_0\rangle}{\sqrt{d_k}}\in\mathbb{R}.
\]
Hence, for a fixed context \(\rvz_{1:n}\) (which fixes all \(k_i\) and \(u_0\)),
the attention weights are \emph{softmax of affine functions of \(\rvx_q\)}:
\begin{equation}
\alpha_i(\rvx_q;\rvz) \;=\;
\frac{\exp\big(a_i^\top \rvx_q + b_i(\rvz)\big)}
{\sum_{j=1}^{L}\exp\big(a_j^\top \rvx_q + b_j(\rvz)\big)}\!,
\qquad i=1,\dots,L.
\label{20}
\end{equation}
The SA output at the query position is
\(h_q \;=\; \rvz_q + W_O\sum_{i=1}^{L}\alpha_i(\rvx_q;\rvz)\,v_i\).
For a fixed linear predictor \(w\in\mathbb{R}^{d}\) (or equivalently choosing
a fixed output coordinate), the scalar prediction is
\begin{equation}
\hat \ry_q(\rvx_q)
= w^\top h_q
= \underbrace{w^\top \rvz_q}_{\text{affine\ in }\rvx_q}
+ \sum_{i=1}^{L}\underbrace{\big(w^\top W_O v_i\big)}_{:=\,\gamma_i(\rvz)}\,
\alpha_i(\rvx_q;\rvz).
\label{21}
\end{equation}
If we choose \(w\) orthogonal to \(\mathrm{Im}(E_x)\) (always possible unless
\(E_x=0\)), then \(w^\top \rvz_q = w^\top(E_x \rvx_q + r_q) = w^\top r_q\)
is a context-only constant; denote \(c(\rvz):=w^\top r_q\).
With \(\gamma(\rvz):=(\gamma_1(\rvz),\dots,\gamma_L(\rvz))^\top\), Equation~\ref{21} simplifies to
\begin{equation}
\hat \ry_q(\rvx_q)\;=\; c(\rvz) \;+\; \gamma(\rvz)^\top \alpha(\rvx_q;\rvz),
\label{22}
\end{equation}
where \(\alpha(\cdot;\rvz)\) is given by the ratio-of-exponentials form in Equation~\ref{20}. This exhibits the claimed dependence of \(\hat \ry_q\) on \(\rvx_q\) through a softmax over affine functions of \(\rvx_q\).
\paragraph{Step 2: A two-key reduction yields a linearly independent logistic family.}
Specialize to $d_x=1$ and one context keys ($n=1$) with $a_1\neq a_2$.
Choose $W_O,V$ so that $c(\rz)\equiv0$ and $\gamma_1(\rz)=1,\gamma_2(\rz)=0$.
Then Equation~\ref{22} reduces to
\[
\hat \ry_q(\rx_q) \;=\; \frac{\exp(a_1 \rx_q + b_1(\rz))}{\exp(a_1 \rx_q + b_1(\rz))+\exp(a_2 \rx_q + b_2(\rz))}
\;=\; \frac{1}{1 + t(\rz)\, e^{-a \rx_q}},
\]
where $a:=a_1-a_2\neq 0$ and $t(\rz):=\exp\big(b_2(\rz)-b_1(\rz)\big)>0$.
As the context varies, $t(\rz)$ can take arbitrarily many distinct positive values, so SA realizes the one-parameter family of functions
\[
\mathcal{F}\;=\;\Big\{f_t(\rx):=\frac{1}{1+t e^{-a \rx}}\;:\; t>0\Big\}.
\]
Fix distinct $t_1,\ldots,t_m>0$. Suppose there exist scalars $\lambda_1,\ldots,\lambda_m$ with
$\sum_{i=1}^m \lambda_i f_{t_i}(\rx)\equiv 0$ for all $\rx\in\mathbb{R}$. Multiplying both sides by $\prod_{i=1}^m (1+t_i e^{-a \rx})$ and letting $s=e^{-a \rx}$ gives the polynomial identity
\[
\sum_{i=1}^m \lambda_i \prod_{j\neq i}(1+t_j s) \;\equiv\; 0 \quad\text{for all } s>0.
\]
A polynomial that vanishes on an infinite set is identically zero; hence the identity holds for all $s\in\mathbb{R}$.
Evaluating at $s=-1/t_k$ yields
\[
\lambda_k \prod_{j\neq k}\!\left(1-\frac{t_j}{t_k}\right) \;=\;0.
\]
Since the $t_i$ are distinct, each product is nonzero, forcing $\lambda_k=0$ for all $k$. Thus
$f_{t_1},\ldots,f_{t_m}$ are linearly independent. Consequently, the linear span of $\mathcal{F}$ is infinite-dimensional.
\paragraph{Step 3: Contradiction with any finite-dimensional bilinear decomposition.}
If the bilinear decomposition $\hat \ry_q(\rx_q)=\langle \omega_0(\rz),\phi_0(\rx_q)\rangle$ held with a \emph{fixed} feature map $\phi_0:\mathbb{R}\to\mathbb{R}^d$ (independent of the context), then for all contexts the functions $\, \rx_q \mapsto \hat \ry_q(\rx_q)$ would lie in the $d$-dimensional linear span of the coordinate functions of $\phi_0$. However, Step 2 shows that by varying the context, SA realizes an infinite set $\mathcal{F}$ of pairwise linearly independent functions in $\rx$, which cannot be contained in any finite-dimensional linear subspace. This contradiction rules out the existence of such $(\phi_0,\omega_0)$.
\end{proof}

\section{Experimental Details and Additional Results}
\label{appendix_exp}
In this part of the appendix, we provide detailed descriptions of the experiments in the main text and include additional experimental results.

\subsection{Representation Visualization}

To further illustrate the training dynamics in Figure~\ref{rep} left, we visualize the evolution of representations for three common classes during training under training settings $(64, 12, 0.9, 2)$ and $(64, 12, 0.9, 0)$, as shown in Figure~\ref{appendix_rep1}. We find that under different data distributions, the model's representation space diverges significantly from the very beginning of training and eventually converges to distinct structure. This clearly reflects that the balance of sample-level and context-level information is highly sensitive to the training distribution.

\begin{figure}[t]
	\begin{center}
		\includegraphics[width=\textwidth]{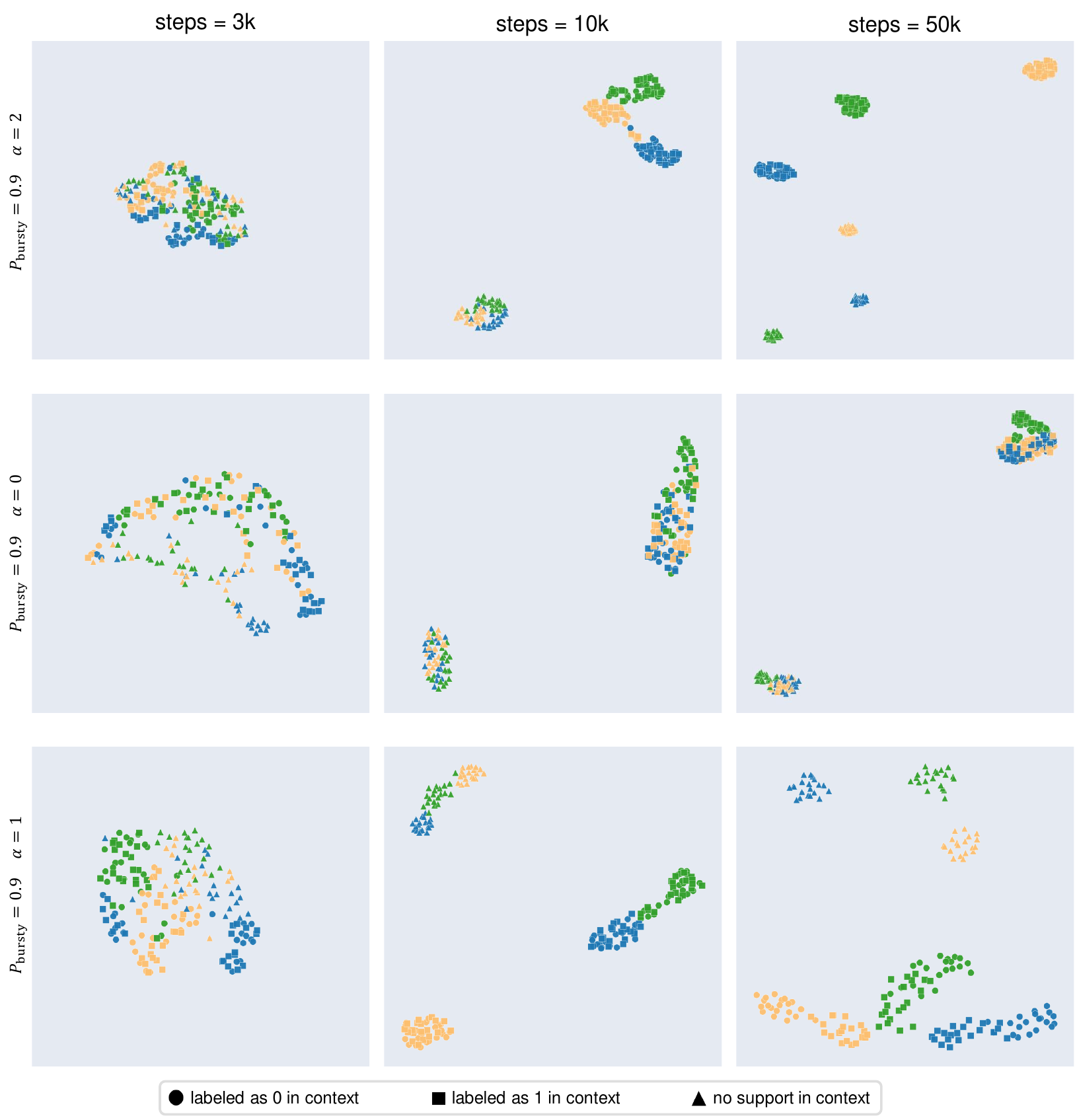}
	\end{center}
	\caption{The uncropped visualization results of training dynamics.}
	\label{appendix_rep1}
\end{figure}

\subsection{Regression}
\label{appendix_reg}

\paragraph{Implementation details.} We use Transformer architectures from the GPT-2 family~\citep{radford2018improving} as implemented by HuggingFace~\citep{wolf2020transformers}. Specifically, the Transformer baseline we use is configured with an embedding dimension of $64$, $3$ layers, and $2$ attention heads, resulting in a total of $0.2$M parameters. The task encoder of CoQE uses the exact same Transformer configuration. The representation encoder of CoQE consists of a two-layer ReLU network, implemented as a linear projection, followed by a ReLU activation, a LayerNorm, and a second linear layer. For fair comparison, the baseline Transformer's embedding module uses the exact same two-layer ReLU network. During training across the four classes of functions, we use a batch size of $64$ and a learning rate of $5\text{e}-5$. For the three tasks except combination functions, models are trained for $1 \times 10^5$ steps, while the combination task is trained for $2 \times 10^5$ steps due to its increased difficulty. All experiments are conducted on an NVIDIA RTX 4090 GPU.

\paragraph{Setup details.}
We consider two major evaluation scenarios for regression: in-distribution (ID) testing and out-of-distribution (OOD) testing. In the ID scenario, we set \( \mathcal{D}^{\text{train}}_\mathcal{X} = \mathcal{D}^{\text{test}}_\mathcal{X} = \mathcal{D}_{\text{query}}\), and $\mathcal{D}^{\text{train}}_{\mathcal{F}}=\mathcal{D}^{\text{test}}_{\mathcal{F}}$. Specifically, we use the following four classes of functions \( \mathcal{F} \):
\begin{itemize}
  \item Linear functions: \( \mathcal{F} = \{ f \mid f(\rvx) = \rvw^\top \rvx,\; \rvw \in \mathbb{R}^d \} \), where $d=10$. We sample $\rvx_1, \dots, \rvx_j, \rvx_q$ and $\rvw$ independently from the isotropic Gaussian distribution $\mathcal{N}(0,I_d)$, then compute $f(x_i)=\rvw^\top \rvx_i$ to construct the prompt. In this setting we use the least squares estimator as the optimal baseline.
  \item Sparse linear functions: \( \mathcal{F} = \{ f \mid f(\rvx) = \rvw^\top \rvx,\; \rvw \in \mathbb{R}^d, \|\rvw\|_{0}\le s \} \), where $d=10$ and $s=3$. We also sample $\rvx_1, \dots, \rvx_j, \rvx_q$ and $\rvw$ independently from $\mathcal{N}(0,I_d)$, and then zero out all but $s$ coordinates of $\rvw$ uniformly at random. We use the least squares estimator and Lasso, which leverages sparsity with an $\ell_1$-norm regularizer as baselines.
  \item Two-layer ReLU neural networks: \( \mathcal{F}= \{ f \mid f(\rvx)=\textstyle\sum_{i=1}^{h} \ra_i\,\sigma(\rvw_i^{\!\top} \rvx),\;\ra_i\in\mathbb{R},\; \rvw_i\in\mathbb{R}^{d} \}\), where \(\sigma(z)=\max\{0,z\}\) is the ReLU activation function, and $d=5,\; h=10$. We sample $\rvx_i$s and $\rvx_q$ from $\mathcal{N}(0,I_d)$, along with network parameters $\ra_i$s from $\mathcal{N}(0,2/h)$. We sample $\rvw_i$s from $\mathcal{N}(0,I_d)$, and share them across all tasks in $\mathcal{F}$. The baseline is a two-layer neural network of the same architecture trained on in-context examples using Adam. 
  \item Combination functions: $\mathcal{F}=\{f\mid f(\rvx)= \rvw^\top \Phi(\rvx),\; \rvw\in\mathbb{R}^{5}\}$, where $\Phi$ is an element-wise combination function. For $\rvx=[\,\ervx_1, \ervx_2, \ervx_3, \ervx_4, \ervx_5\,]$, $\Phi(\rvx)= [\,|\ervx_1|,\; \ervx_2^{2},\; \ervx_3^{3},\; \cos(\pi \ervx_4),\; e^{0.2\,\ervx_5}\,]^\top$. We sample $\rvx_i$s, $\rvx_q$ and $\rvw$ from $\mathcal{N}(0,I_5)$ independently. In this setting, there is no naturally optimal baseline, so we compare only with the Transformer.
\end{itemize}
The latter two classes of nonlinear functions allow the model to reduce ICL difficulty through representation learning, by learning task-invariant \( \rvw_i \)s or $\Phi$.

In the OOD scenario, we consider four different cases of distributional shifts under linear functions.
\begin{itemize}
    \item \( \mathcal{D}^{\text{train}}_\mathcal{X} \neq \mathcal{D}^{\text{test}}_\mathcal{X} = \mathcal{D}_{\text{query}}\). We consider the setting where the prompt inputs $\rvx_i$s' scale between training and testing is different. We scale them by a factor of $0.8$ or $1.2$.
    \item \( \mathcal{D}^{\text{train}}_\mathcal{X} = \mathcal{D}^{\text{test}}_\mathcal{X} \neq \mathcal{D}_{\text{query}}\). We sample the context examples from the same distribution as at training time, but sample $\rvx_q$ from a Gaussian distribution with $3\times$ higher standard deviation.
    \item \( \mathcal{D}^{\text{test}}_\mathcal{X} \neq \mathcal{D}^{\text{train}}_\mathcal{X} = \mathcal{D}_{\text{query}}\). We fix the sign of each coordinate to be randomly positive or negative for all prompt inputs $\rvx_i$s, and draw $\rvx_q$ from $\mathcal{N}(0,I)$ as before.
    \item $\mathcal{D}^{\text{train}}_{\mathcal{F}} \neq \mathcal{D}^{\text{test}}_{\mathcal{F}}$. We consider scaling the weight vector by a factor of $0.8$ or $1.2$, to capture shifts of task functions.
\end{itemize}
Through the above diverse evaluation settings, we comprehensively demonstrate that CoQE consistently exhibits stronger ICL capability than a standard Transformer of comparable size on regression tasks.

\paragraph{Additional results on representation learning.}

\begin{figure}[t]
\begin{center}
\includegraphics[width=0.7\textwidth]{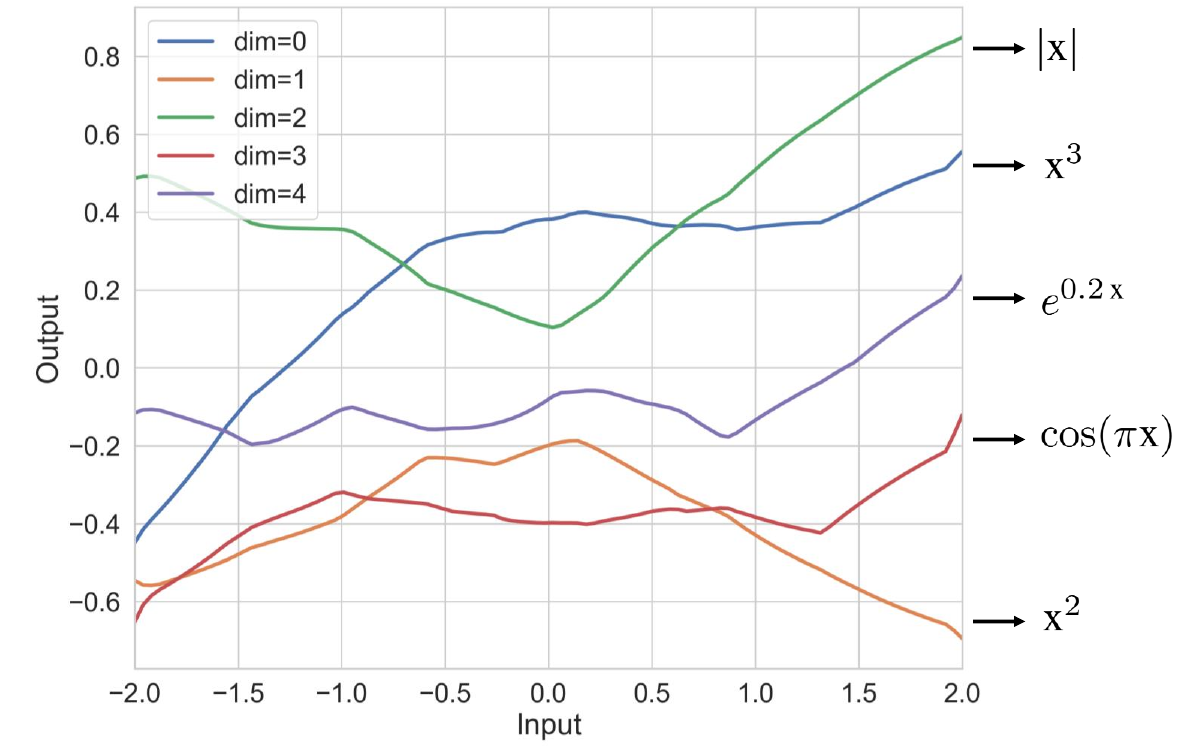}
\end{center}
\caption{The $5$-dimensional sample representation space learned by CoQE for the combination functions.}
\label{reg_rep}
\end{figure}

Our Theorem~\ref{3.6} shows that under the dual-space modeling framework, a sufficient set of tasks guarantees a basis-covering sample representation space that the model learns. For empirical validation, we design the task type of two-layer ReLU networks and combination functions, whose different task functions share a common sample representation space in their construction. Figure~\ref{reg_result} (a) shows that CoQE indeed achieves a smaller ICL error on these tasks. Furthermore, under the combination functions task, we set the sample representation space dimension of CoQE to $5$, matching that of $\Phi$, and directly visualize the $5$-dimensional sample representation space learned by CoQE after training (Figure~\ref{reg_rep}). From the figure, we can observe that the five dimensions appear to differentiate in a manner close to the respective transformations of the five dimensions of $\Phi$. Although, due to the equivalence of sample representation spaces under linear transformations, i.e., $f = \rvw^\top\Phi(\rvx) = \rvw^\top H^{-1} \cdot H\Phi(\rvx)$ where $H$ denotes an arbitrary invertible matrix, it is essentially impossible for the model to learn $\Phi$ with perfectly identical scale and shape. The current differentiation can be regarded as another empirical proof of Theorem~\ref{3.6} that our dual-modeling could facilitate learning of the basis-covering sample representation space.

\subsection{Synthetic Few-shot Classification}
\label{appendix_cls}

\paragraph{Implementation details.}
In our experiments, we employ ResNets of three sizes (with embedding dimensions $E=64$, $E=256$ and $E=512$) to encode images. All architectures consist of four groups, each containing two residual blocks. The difference lies in the embedding dimensions of each group: for the $E=64$ ResNet, the four groups produce embeddings of sizes $16$, $32$, $32$, and $64$, respectively; for the $E=256$ ResNet, the sizes are $64$, $128$, $128$, and $256$; for the $E=512$ ResNet, the sizes are $64$, $128$, $256$, and $512$. Then a fully connected layer is appended to the $E=256$ and $E=512$ ResNet to project the final embedding dimension back to $64$ before feeding it into the Transformer. Larger embedding dimension of Resnets clearly possesses a stronger capacity for extracting visual sample representations. As a result, the resulting embedding tokens are more expressive. For CoQE, we find that the $E=64$ ResNet is insufficient for the sample encoder. We also employ two Transformer configurations with different layers: $L=4$ and $L=12$. Both variants use an embedding dimension of $64$ and $8$ attention heads. We have shown that CoQE with only an $L = 4$ Transformer in the task encoder, can match the ICL and IWL performance of an $L = 12$ Transformer. In our experiments, a baseline Transformer with $E=64$ and $L=12$ contains approximately $0.9$M parameters, CoQE with $E=256$ and $L=4$ has $1.0$M parameters, while CoQE with $E=512$ and $L=4$ has $2.0$M parameters.

When training CoQE, we add Gaussian noise to the modified logits to prevent the task encoder's output from collapsing to a static vector. Specifically, the initial noise is sampled from $\mathcal{N}(\mu_0, 1)$, and both the mean and standard deviation are incremented by $1$ every $10^4$ training steps. During training of baseline Transformers and CoQE, we use a batch size of $24$, a learning rate of $1\text{e}-4$, and train for $1\times10^5$ steps. All experiments are conducted on $8\times$ NVIDIA V100 GPUs.

\paragraph{Detailed results of all settings and baselines.}
In Table~\ref{tab:model_performance}, we report the ICL and IWL performance across all model scales, training settings, and algorithms. In Table~\ref{tab:scaling}, we report the effect of appropriately scaling both the Transformer and CoQE. These results demonstrates that CoQE can robustly reconcile ICL and IWL

\begin{table}[htbp]
	\centering
	\caption{Performance comparison of different Transformer variants and baselines. All values are presented as percentages. The maximum and second-largest values are indicated by boldface and underline, respectively.
	}
	\label{tab:model_performance}
	\renewcommand{\arraystretch}{1.25} 
	\resizebox{\textwidth}{!}{
		\begin{tabular}{l c c c c c c c | c}
			\toprule
			\multirow{2}{*}{{Model}} & \multirow{2}{*}{{Config}} & \multirow{2}{*}{Metric} & \multicolumn{5}{c}{{Settings} $(P_\text{bursty}, \alpha)$} & \multirow{2}{*}{{Avg.}} \\
			\cmidrule(lr){4-8}
			& & & $(0.9, 1)$ & $(0.9, 0)$ & $(0.9, 2)$ & $(0.5, 0)$ & $(0.1, 0)$ & \\
			\midrule
			
			\multirow{18}{*}{Transformer} 
			& $E=64, L=12$ & ICL & 84.2 & 73.1 & 49.7 & 64.5 & 50.5 & 64.4 \\
			& \#Param: 0.9M & IWL & 65.1 & 4.1 & 98.0 & 17.8 & 51.5 & 47.3 \\
			\cmidrule{2-9} 
			
			& $E=64, L=8$ & ICL & 74.4 & 67.2 & 50.2 & 57.5 & 50.3 & 59.9 \\
			& \#Param: 0.6M & IWL & 65.2 & 8.1 & 97.9 & 21.4 & 57.9 & 50.1 \\
			\cmidrule{2-9} 
			
			& $E=64, L=4$ & ICL & 50.0 & 50.5 & 50.1 & 50.2 & 50.4 & 50.2 \\
			& \#Param: 0.4M & IWL & 66.2 & 15.1 & 98.1 & 27.3 & 63.1 & 54.0 \\
			\cmidrule{2-9} 
			
			& $E=256, L=12$ & ICL & 49.8 & 51.4 & 50.0 & 50.2 & 50.5 & 50.4 \\
			& \#Param: 1.4M & IWL & 72.3 & 20.1 & 98.4 & 58.1 & 75.4 &  64.9\\
			\cmidrule{2-9} 
			
			& $E=256, L=8$ & ICL & 50.8 & 50.7 & 50.2 & 49.9 & 50.7 & 50.5\\
			& \#Param: 1.2M & IWL & 73.1 & 28.7 & 98.4 & 60.2 & 78.5 & 67.8\\
			\cmidrule{2-9} 
			
			& $E=256, L=4$ & ICL & 50.0 & 50.3 & 50.6 & 49.9 & 50.7 & 50.3 \\
			& \#Param: 1.0M & IWL & 74.4 & 38.8 & 98.6 & 63.4 & 81.1 & 71.3 \\
			\cmidrule{2-9} 
			
			& $E=512, L=12$ & ICL & 50.5 & 50.5 & 50.3 & 50.1 & 50.5 & 50.4 \\
			& \#Param: 2.4M & IWL & 79.6 & 36.0 & 98.9 & 98.4 & 99.0 & 82.4 \\
			\cmidrule{2-9} 
			
			& $E=512, L=8$ & ICL & 50.4 & 49.8 & 50.3 & 50.2 & 51.0 &  50.3\\
			& \#Param: 2.2M & IWL & 81.0 & 49.3 & \underline{99.0} & \underline{98.9} & \underline{99.1} & 85.5 \\
			\cmidrule{2-9} 
			
			& $E=512, L=4$ & ICL & 49.9 & 50.0 & 51.0 & 49.5 & 51.0 & 50.3 \\
			& \#Param: 2.0M & IWL & \underline{82.5} & 62.5 & \textbf{99.1} & \textbf{99.4} & \textbf{99.2} & \textbf{88.5} \\
			\midrule 
			
			Transformer & $E=64, L=12$ & ICL & 80.0 & 70.3 & 49.9 & 59.8 & 50.1 & 62.0 \\
			\multicolumn{1}{l}{+ Noise}& \#Param: 0.9M & IWL & 60.4 & 1.2 & 91.3 & 8.9 & 49.9 & 42.3 \\
			\midrule
			
			Transformer & $E=64, L=12$ & ICL & 80.5 & 64.2 & 50.2 & 67.8 & 53.1 & 63.2 \\
			\multicolumn{1}{l}{+ Regularization}& \#Param: 0.9M & IWL & 64.9 & 3.1 & 63.2 & 36.4 & 98.5 & 53.2 \\
			\midrule
			
			Transformer & $E=64, L=12$ & ICL & 50.4 & 51.0 & 49.8 & 50.5 & 51.0 & 50.5 \\
			\multicolumn{1}{l}{+ Temporary Forgetting}& \#Param: 0.9M & IWL & 54.8 & 0.8 & 97.1 & 7.2 & 38.5 & 39.7 \\
			\midrule
			
			\multirow{2}{*}{Linear Transformer} & $E=64, L=12$ & ICL & 51.3 & 51.0 & 50.0 & 50.1 & 50.3 & 50.5 \\
			& \#Param: 0.8M & IWL & 68.3 & 8.0 & 98.4 & 21.0 & 53.5 & 49.8 \\
			\midrule

			\multirow{4}{*}{CoQE} 
			& $E=256, L=4$ & ICL & \underline{90.3} & \underline{89.2} & \underline{85.8} & \underline{88.0} & \underline{90.3} & \underline{88.7} \\
			& \#Param: 1.0M & IWL & 82.0 & \underline{81.0} & 63.1 & 82.0 & 87.9 & 79.2 \\
			\cmidrule{2-9}
			
			& $E=512, L=4$ & ICL & \textbf{92.1} & \textbf{91.2} & \textbf{88.0} & \textbf{90.2} & \textbf{92.0} & \textbf{90.7} \\
			& \#Param: 2.0M & IWL & \textbf{90.0} & \textbf{89.4} & 67.9 & 91.0 & 95.4 & \underline{86.7} \\
			
			\bottomrule
		\end{tabular}
	}
\end{table}

\begin{table}[h]
	\centering
	\caption{The effect of appropriately scaling both the Transformer and CoQE.
	}
	\label{tab:scaling}
	\small
	\setlength{\tabcolsep}{3pt}
	
	\begin{minipage}[t]{0.48\textwidth}
		\centering
		\begin{tabular}{lccc}
			\toprule
			Model & Config & {ICL} & {IWL} \\
			\midrule
			\multirow{3}{*}{{Trans.} ($E=64, H=64$)} & $L=12$ & 84.2 & 65.1 \\
			& $L=14$ & 86.5 & 62.9 \\
			& $L=16$ & 73.0 & 62.9 \\
			\midrule
			\multirow{3}{*}{{CoQE} ($E=256, H=64$)} & $L=4$ & \textbf{90.3} & 82.0 \\
			& $L=6$ & 89.3 & 84.7 \\
			& $L=8$ & 89.7 & \textbf{85.2} \\
			\bottomrule
		\end{tabular}
	\end{minipage}
	\hfill 
	\begin{minipage}[t]{0.48\textwidth}
		\centering
		\begin{tabular}{lccc}
			\toprule
			{Model} & {Config} & {ICL} & {IWL} \\
			\midrule
			\multirow{3}{*}{{Trans.} ($E=64, L=12$)} & $H=64$ & 84.2 & 65.1 \\
			& $H=72$ & 51.0 & 67.4 \\
			& $H=80$ & 51.0 & 68.1 \\
			\midrule
			\multirow{3}{*}{{CoQE} ($E=256, L=4$)} & $H=64$ & 90.3 & 82.0 \\
			& $H=72$ & \textbf{92.6} & \textbf{83.0} \\
			& $H=80$ & 90.1 & 82.2 \\
			\bottomrule
		\end{tabular}
	\end{minipage}
	
\end{table}

\paragraph{Ablation study.}

We present the training curves of CoQE under different levels of noise (Figure~\ref{fig:ablation}). It is evident that, in the absence of noise, the model's ICL capability rapidly decays after an initial emergence, accompanied by a similarly rapid increase in IWL performance. Although this observation is not made under a standard Transformer model, we hypothesize that the underlying phenomenon extends beyond model architecture, reflecting the intrinsic properties of the two strategies. ICL is a lightweight, dynamic strategy, whereas IWL is more training-intensive but ultimately more stable. In standard Transformers, where the two strategies are difficult to co-exist, training often leads to a transition from ICL to IWL. In contrast, CoQE enables robust coexistence of both strategies through explicit modeling and learning of the task representation space, as well as the use of Gaussian noise to isolate the task-transformations associated with each strategy.

\begin{figure}[h]
  \centering
  \begin{minipage}[t]{0.49\linewidth}
    \centering
    \includegraphics[width=\linewidth]{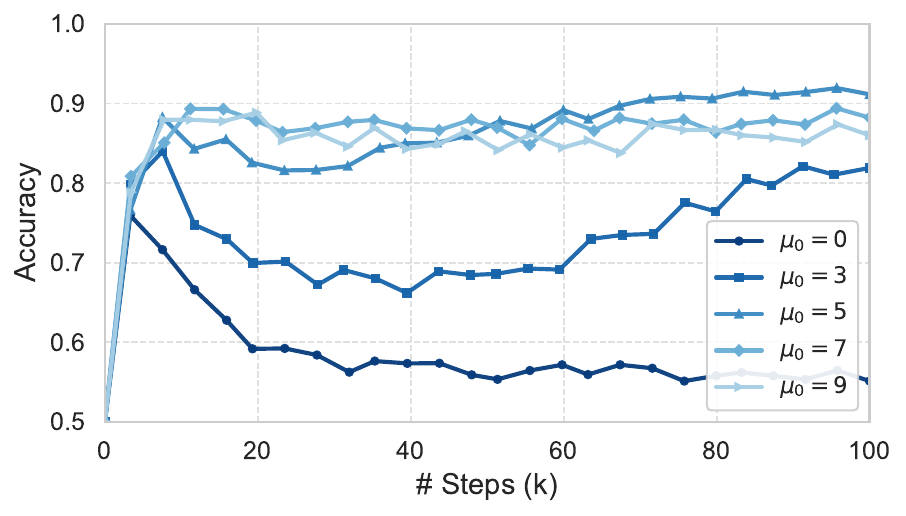}
    \caption*{(a) ICL accuracy}
  \end{minipage}
  \begin{minipage}[t]{0.49\linewidth}
    \centering
    \includegraphics[width=\linewidth]{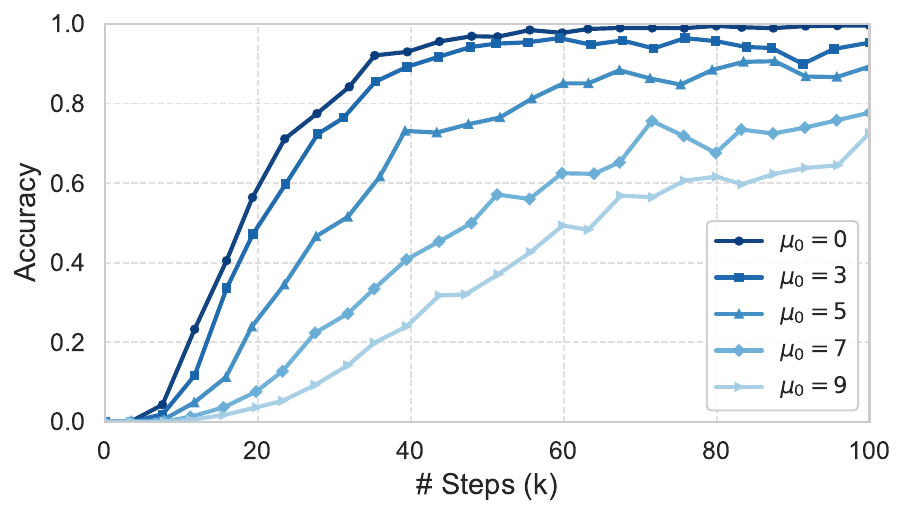}
    \caption*{(b) IWL accuracy}
  \end{minipage}
  \caption{Learning curves under different noise levels}
  \label{fig:ablation}
\end{figure}

\subsection{Llama Based Synthetic Few-shot Classification}
\label{appendix_llama}
\paragraph{Datasets construction.}
We saved the token embedding matrices of the Llama-3.2-1B, Llama-3.2-3B, and Llama-3.1-8B models. Each embedding matrix has a shape of \(128{,}256 \times d_{\text{model}}\), where \(d_{\text{model}} = 2048\), \(3072\), and \(4096\) for the 1B, 3B, and 8B models, respectively. To construct ``classes'' for our few-shot classification experiments from these raw token embeddings, we apply the following three processing steps:

\begin{enumerate}
	\item Subselection: 
	Among the 128,256 tokens, many correspond to special symbols or non-English text. We selected all tokens consisting solely of English letters A-Z, a-z and underscores, resulting in 33,588 tokens.
	\item Clustering: 
	We applied spherical clustering using FAISS, performing 2,400-way clustering and retaining all clusters containing more than 10 tokens. From these, we randomly selected 1,200 clusters.
	\item Cluster Sampling: 
	Many clusters contained more than 10 tokens. To maximize intra-class variability, we selected the 10 tokens farthest from each cluster center. Consequently, we obtained 1,200 classes, each containing 10 samples.
\end{enumerate}

Compared with Omniglot images, this construction produces classes with greater intra-class variability 
while reflecting semantic relations more relevant to NLP tasks. 
Some sample clusters from  Llama-3.2-3B are shown below:

English | edish  | apanese  | French  | Chinese  | orean  | California  | Russian  | frican  | ustralian

Class | \_class | addClass | \_CLASS | className | removeClass | classList | Classifier | hasClass | getClass

ellow | green | Green | iolet | orange | urple | agenta | purple | Pink | greens | \_YELLOW

\paragraph{Implementation details.}

We trained both the Transformer and CoQE models on the constructed dataset following the same procedure described in the main text, fixing the data distribution parameters to \( P_{\text{bursty}} = 0.9 \) and Zipfian exponent \( \alpha = 1 \) to approximate the natural language distribution. In terms of model architecture, for both Transformer and CoQE, we used a three-layer MLP instead of a ResNet as the sample encoder, with hidden dimensions of 512, 256, and 64, respectively. The Transformer consists of 12 layers, while the task encoder in CoQE contains 4 layers, resulting in fewer parameters for CoQE compared with the Transformer. All other training hyperparameters such as learning rate and batch size, are set identical to those used in the image few-shot classification experiments.

\end{document}